%% file: main.tex
\documentclass[10pt,twocolumn,letterpaper]{article}

\usepackage{cvpr}              

\input{preamble}

\definecolor{cvprblue}{rgb}{0.21,0.49,0.74}
\usepackage[pagebackref,breaklinks,colorlinks,allcolors=cvprblue]{hyperref}
\usepackage[accsupp]{axessibility}
\usepackage{multirow}
\usepackage{graphicx}
\usepackage{booktabs}
\usepackage{adjustbox}
\usepackage{rotating}
\usepackage{makecell}
\usepackage{amsthm}
\usepackage{marvosym}
\usepackage{algorithm}
\usepackage{algpseudocode}
\usepackage[table]{xcolor} 

\newtheorem{proposition}{Proposition}[section]
\newtheorem{assumption}{Assumption}[section]

\def\paperID{***} 
\def\confName{CVPR}
\def\confYear{2026}

\title{The Golden Subspace: Where Efficiency Meets Generalization \\ in Continual Test-Time Adaptation}

\author{
Guannan Lai$^{1,2}$, 
Da-Wei Zhou$^{1,2}$, 
Zhenguo Li$^{3,4}$, 
Han-Jia Ye$^{1,2}$\textsuperscript{(\Letter)} \\
$^{1}$ School of Artificial Intelligence, Nanjing University \\
$^{2}$ National Key Laboratory for Novel Software Technology, Nanjing University \\
$^{3}$ Hong Kong University of Science and Technology \quad
$^{4}$ Frontier Robotics\\
{\tt\small \{laign, zhoudw, yehj\}@lamda.nju.edu.cn, zhenguol@gmail.com}
}

\begin{document}
\maketitle
\footnotetext[2]{Correspondence to: Han-Jia Ye (yehj@lamda.nju.edu.cn)}

\input{sec/0_abstract}    
\input{sec/1_intro}
\input{sec/2_rw}
\input{sec/3_ana}
\input{sec/4_method}

\input{sec/5_exp}
\input{sec/6_con}

\clearpage

\section*{Acknowledgments}
This work is partially supported by the National Key R\&D Program of China (2024YFE0202800), the NSFC (62522605, 62506160), the Basic Research Program of Jiangsu (BK 20251251), JSTJ-2025-147, and the Fundamental and Interdisciplinary Disciplines Breakthrough Plan of the Ministry of Education of China (No. JYB2025XDXM118). We sincerely thank Daniel Beaglehole, one of the originators of RFM, for insightful discussions and constructive suggestions that helped clarify the use of AGOP in CTTA. We also thank Hao-Run Cai and Xiangkun Wang for helpful discussions.

{
    \small
    \bibliographystyle{ieeenat_fullname}
    \bibliography{main}
}

\input{sec/X_suppl}

\end{document}

%% file: preamble.tex
\usepackage{soul}

%% file: sec/0_abstract.tex
\begin{abstract}
Continual Test-Time Adaptation (CTTA) aims to enable models to adapt online to unlabeled data streams under distribution shift without accessing source data. Existing CTTA methods face an efficiency–generalization trade-off: updating more parameters improves adaptation but severely reduces online inference efficiency. An ideal solution is to achieve comparable adaptation with minimal feature updates; we call this minimal subspace the \emph{golden subspace}. We prove its existence in a single-step adaptation setting and show that it coincides with the row space of the pretrained classifier. To enable online maintenance of this subspace, we introduce the sample-wise Average Gradient Outer Product (AGOP) as an efficient proxy for estimating the classifier weights without retraining. Building on these insights, we propose \textbf{Guided Online Low-rank Directional adaptation (GOLD)}, which uses a lightweight adapter to project features onto the golden subspace and learns a compact scaling   vector while the subspace is dynamically updated via AGOP. Extensive experiments on classification and segmentation benchmarks, including autonomous-driving scenarios, demonstrate that GOLD attains superior efficiency, stability, and overall performance. Our code is available at \href{https://github.com/AIGNLAI/GOLD}{https://github.com/AIGNLAI/GOLD}.
\end{abstract}

%% file: sec/1_intro.tex
\section{Introduction}

In real-world applications, models are often required to perform inference under continuously shifting data distributions \cite{boudiaf2023search,fan2024dynamic,kundu2020universal}, such as weather and illumination changes in autonomous driving, scene variations in video stream analysis, and device heterogeneity in medical imaging \cite{koh2021wilds,yang2020fda,yang2021generalized}. These scenarios demand models that can adapt online during test time to unknown and gradually changing target domains, without re-accessing source data or performing offline retraining. This problem setting, known as \textbf{Continual Test-Time Adaptation (CTTA)}, aims to enable models to maintain robust performance in dynamic environments \cite{wang2022continual,cui2025continual,park2025hybrid}.
\begin{figure}
    \centering
    \begin{subfigure}{0.235\textwidth}
        \centering
        \includegraphics[width=\linewidth]{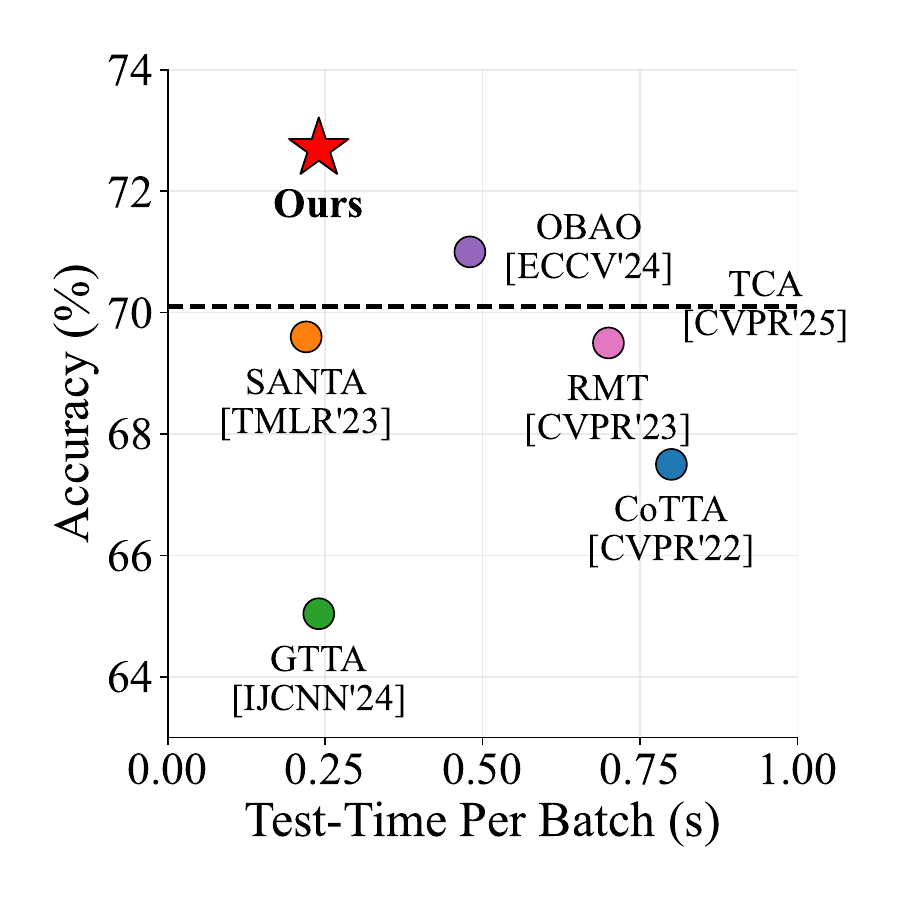}
        \caption{Efficiency and Accuracy}
        \label{fig:time}
    \end{subfigure}
    \begin{subfigure}{0.235\textwidth}
        \centering
        \includegraphics[width=\linewidth]{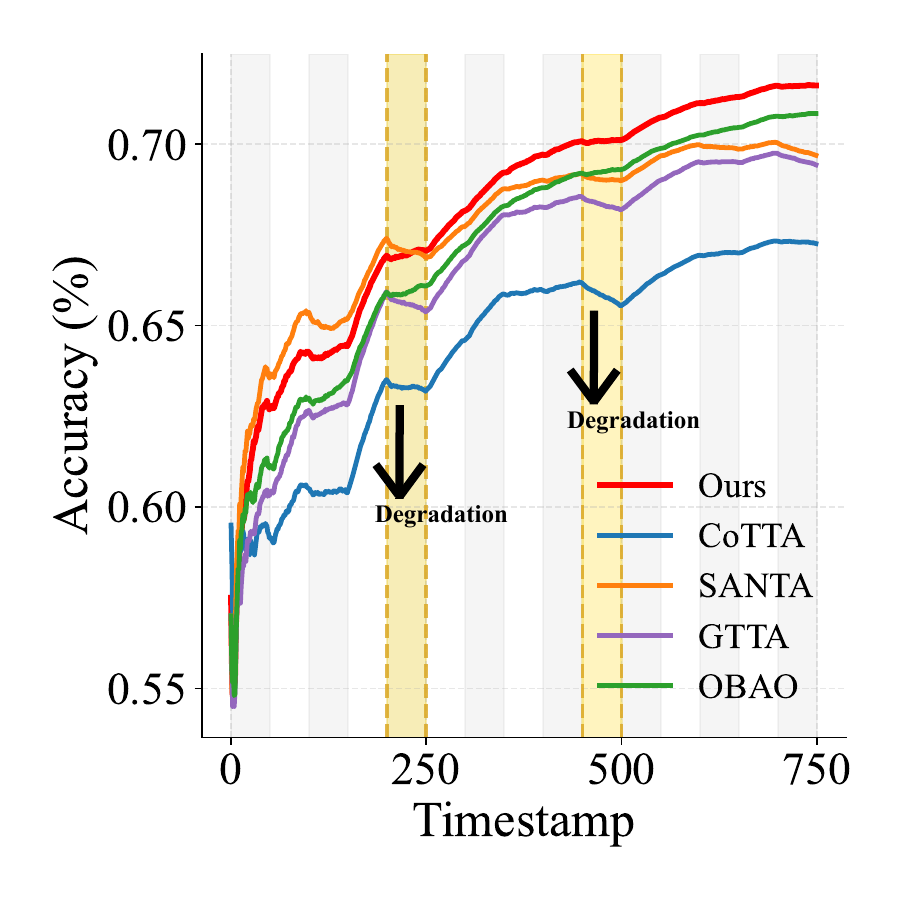}
        \caption{Generalization}
        \label{fig:acc}
    \end{subfigure}
        \caption{Results on the CIFAR100-C dataset. \emph{\textbf{Left:}} performance–efficiency distribution of evaluated methods (TCA \cite{ni2025maintaining} is closed-source and shown as a line). \emph{\textbf{Right:}} accuracy over the continuous test-time adaptation process; shaded background bands indicate periods of domain shift. In the \textbf{golden-shaded} intervals, prior methods exhibit significant performance degradation, whereas our method maintains generalization.}

    \label{fig:in}
\end{figure}
In practical deployments, CTTA faces a trade-off between efficiency and generalization: better generalization typically requires more complex models and more intensive computation, reducing run-time efficiency. Existing approaches typically update model parameters at test time using self-supervised objectives, entropy minimization, or adaptive batch-normalization statistics \cite{liu2024vida, brahma2023probabilistic}. While such updates can provide short-term gains, they substantially increase computational cost and tend to amplify pseudo-label noise and induce parameter drift, undermining the generalization of the continual adaptation process and ultimately degrading long-term performance \cite{chen2024pg, tan2024less, wang2024continual}. As shown in \cref{fig:acc}, when a new domain arrives, existing methods fail to adapt quickly and exhibit sudden performance degradation, indicating an inability to preserve generalization during adaptation.

Ideally, our goal is to achieve the desired change in model output within the feature subspace (to ensure generalization) while keeping the update magnitude minimal (to ensure efficiency). 
We define this subspace as the \textbf{Golden Subspace}. We first consider a single-step adaptation scenario, derive its analytical formulation, and verify the existence of the golden subspace. Our analysis reveals that this subspace is essentially composed of the row-space of the classifier weights, which can be obtained via eigenvalue decomposition of the classifier weight matrix.
\textbf{However}, a new challenge arises: the classifier weights encode predominantly source-domain information, and continuously updating them at test time reintroduces the same problems of high computational cost and structural degradation. \textbf{We therefore ask:} how can the classifier be endowed with target-domain information without retraining its weight?

Motivated by recent advances on Average Gradient Outer Product (AGOP) representations \cite{radhakrishnan2025linear, beaglehole2024average,mallinar2024emergence,beaglehole2025xrfm,beaglehole2026toward}, we find that AGOP computed from high-confidence samples provides a faithful online proxy for the classifier's parameter inner-product structure. Specifically, the golden subspace estimated via AGOP converges over time to the subspace obtained by single-step least-norm adaptation, which guarantees that models built on this estimator retain \textbf{strong generalization}. Empirically, AGOP matrices exhibit low effective rank, implying that the golden subspace is low-rank in practice and therefore amenable to highly \textbf{efficient} online maintenance and adaptation.

Building on these insights, we propose \textbf{Guided Online Low-rank Directional adaptation (GOLD)}, a method for efficient continuous test-time adaptation. GOLD maintains a lightweight matrix, which is initialized from the classifier weight inner-products and is updated online using the AGOP computed from high-confidence test samples. We periodically perform an eigendecomposition of it to extract the current low-rank \emph{golden subspace}. Features from a frozen backbone are projected onto this subspace, and a compact scaling vector is learned to re-scale the projected coordinates for adaptation.
This design introduces only a small number of external parameters and updates a very limited internal parameter subset, thereby intrinsically constraining parameter drift while providing sufficient capacity to refine outputs. As shown in \Cref{fig:in}, GOLD attains the best performance with minimal adaptation cost (\Cref{fig:time}) and preserves high generalization ability across domain shifts (\Cref{fig:acc}).

Our main contributions are summarized as follows:

\begin{itemize}
  \item We formalize the \textit{golden subspace} as the minimal feature-update subspace induced by the classifier and show that it admits a low-rank characterization. We further develop an AGOP-based online estimator that tracks this subspace during test-time adaptation.
  
  \item We introduce GOLD, a source-free online adaptation framework that projects features onto the golden subspace and learns a lightweight scaling vector. GOLD achieves efficient adaptation with minimal  updates.
  
  \item Extensive experiments on \textbf{classification} and \textbf{segmentation} benchmarks demonstrate that GOLD achieves SOTA performance with a minimal computational footprint.
\end{itemize}

%% file: sec/2_rw.tex
\section{Related Work}
\label{sec:rw}

Continual Test-Time Adaptation (CTTA) \cite{liu2024continual,gan2023decorate} extends test-time adaptation to dynamic environments \cite{lai2025order,lai2026the}, where a source model must adapt continuously to a stream of unlabeled and non-stationary target data without revisiting source samples \cite{luo2019taking,miao2024unified}. 

The pioneering CoTTA \cite{wang2022continual} introduced a teacher student framework with weight averaging and stochastic weight restoration to alleviate error accumulation and forgetting.
Building on this idea, PETAL \cite{brahma2023probabilistic} proposed a data-driven parameter recovery mechanism that regularizes model updates toward the source parameter configuration to enhance robustness.
RMT \cite{dobler2023robust} and SANTA \cite{chakrabarty2023santa} enforced feature-level consistency with the source model through contrastive objectives, while DSS \cite{wang2024continual} improved reliability by filtering unsafe pseudo-labels during adaptation.
More recently, TCA \cite{ni2025maintaining} maintained inter-class stability by preserving topological consistency among class representations during domain shifts.
Despite these advances, most existing methods still rely on global or feature-level updates of the entire network \cite{tan2025uncertainty, zhao2025d, chu2025remember}, which inevitably leads to a trade-off between efficiency and generalization.

%% file: sec/3_ana.tex
\section{Preliminary}

\subsection{Problem Definition}

We study the CTTA setting \cite{wang2025search,liang2025comprehensive}, 
where a model $f_{\theta}=h_{\psi}\circ g_{\phi}$ is pretrained on a labeled source domain 
$\mathcal{D}_{\mathrm{src}}=\{(x_i^{\mathrm{src}},y_i^{\mathrm{src}})\}_{i=1}^{N_{\mathrm{src}}}$ 
and deployed to an unlabeled target data stream 
$\mathcal{D}_T = \{\mathcal{X}_1, \mathcal{X}_2, \ldots, \mathcal{X}_T\}$.
Each batch $\mathcal{X}_t = \{x_i^{(t)}\}_{i=1}^{N_t}$ 
is drawn from a target distribution $p_t(x)$ that may evolve over time, 
i.e., $p_t(x) \neq p_{t+1}(x)$.

During deployment, the model encounters the data stream sequentially. 
At each step $t$, it receives the current batch $\mathcal{X}_t$, 
produces predictions $\hat{y}^{(t)} = g_{\theta_t}(\mathcal{X}_t)$, 
and performs online adaptation using only the current unlabeled data, 
without any access to the source domain or previous target samples. 

Two key factors determine the practicality of such continual adaptation: \textbf{efficiency} and \textbf{generalization}.  
For example, in autonomous driving systems, the model must adapt rapidly and robustly within a limited time window to ensure reliable perception and decision-making under shifting environments.

\subsection{Does the Golden Subspace Exist?}
\label{sec:why_directions}

An ideal \textit{golden subspace} achieves the required change in model outputs (ensuring generalization) while minimizing the amount of training (ensuring efficiency). \textbf{We first consider a simple yet insightful single-step adaptation case.}

Given a frozen pretrained classifier \(W\in\mathbb{R}^{C\times L}\) placed on top of a feature extractor. Let the pre-adaptation features for a test batch be \(F\in\mathbb{R}^{B\times L}\), and suppose we aim to realize a desired output correction \(\Delta Y\in\mathbb{R}^{B\times C}\) after adaptation. Seeking the smallest feature-space change (in Frobenius norm) that achieves this correction leads to the least-norm solution
\begin{equation}
    \Delta F^\star = \Delta Y (W^\top)^{\dagger},
    \label{eq: 1}
\end{equation}
where \((\cdot)^{\dagger}\) denotes the Moore–Penrose pseudoinverse.

This algebraic relation directly leads to a rank constraint:
\begin{equation}
    \operatorname{rank}(\Delta F^\star) \le \operatorname{rank}\!\big((W^\top)^{\dagger}\big) = \operatorname{rank}(W^\top W).
    \label{eq: 2}
\end{equation}
This implies that the rank of the golden subspace is constrained by the rank of the classifier weights, making it inherently low-rank. In neural networks, the number of classes is typically small, so the rank of \(W\) is bounded by the number of classes. This means that only a few classifier-related directions are sufficient to modify the model’s predictions for an entire batch, without the need to explore arbitrary high-dimensional perturbations. 

To further interpret this result, consider the singular value decomposition \(W^\top = V \Sigma U^\top\). Substituting into \Cref{eq: 1} gives
$\Delta F^\star = V \Sigma^{\dagger} U^\top \Delta Y$,
showing that the golden subspace is restricted to the subspace spanned by the principal eigenvectors of the classifier. The classifier implicitly defines a set of directions in the feature space to which the output is most sensitive.

This observation confirms the \textbf{existence} of the previously proposed golden subspace. It suggests that we can obtain the golden subspace by performing \textit{eigenvalue decomposition} on the classifier weights. Such a constraint naturally reduces the risk of error amplification caused by noisy pseudo-labels, mitigates catastrophic forgetting by preventing uncontrolled parameter drift, and ultimately leads to a more efficient CTTA process.

\subsection{How to Update the Golden Subspace?}

While \Cref{eq: 1} shows that the golden subspace lies in \(\operatorname{span}(W^\top)\), this fact alone is insufficient in the CTTA setting. The matrix \(W\) encodes a global weight configuration learned on the source domain, whereas the target-domain sample distribution, class confidences, and noise evolve over time. \textbf{Hence the key question is how to update the golden subspace online at test time so that it reflects target-domain semantics}. Retraining the classifier weights is impractical and inefficient, and accumulated errors can destroy the weight structure. Therefore, we require an \textbf{estimator} that dynamically maintains the golden subspace without retraining \(W\), allowing it to implicitly incorporate target-domain information.

We note that, as observed in \cite{radhakrishnan2024mechanism}, for a well-trained neural network \(\hat{f}\) the spectral structure of a layer's weight matrix is closely related to an averaged outer-product of per-sample feature gradients. In particular, for layer \(l\) and its \(i\)-th weight matrix \(W_i^{(l)}\) one can write
\[
   W_i^{(l)\top} W_i^{(l)}
    \propto
    \Bigg(\frac{1}{n}\sum_{p=1}^n \sum_{j=1}^m \nabla_{u_{ij}^{(l)}}\hat{f}(x^{(p)})\,
    \nabla_{u_{ij}^{(l)}}\hat{f}(x^{(p)})^\top \Bigg)^{\alpha}, 
\]
where \(n\) denotes the number of training samples, \(m\) indexes the relevant input subunits (e.g. spatial locations or feature channels) feeding into the \(l\)-th layer, and \(u_{ij}^{(l)}\) is the \(j\)-th input/pre-activation to that layer for unit \(i\). The exponent \(\alpha\) is typically taken to be around \(1/2\) in empirical studies.

\begin{figure}
    \centering
    \begin{subfigure}{0.235\textwidth}
        \centering
        \includegraphics[width=\linewidth]{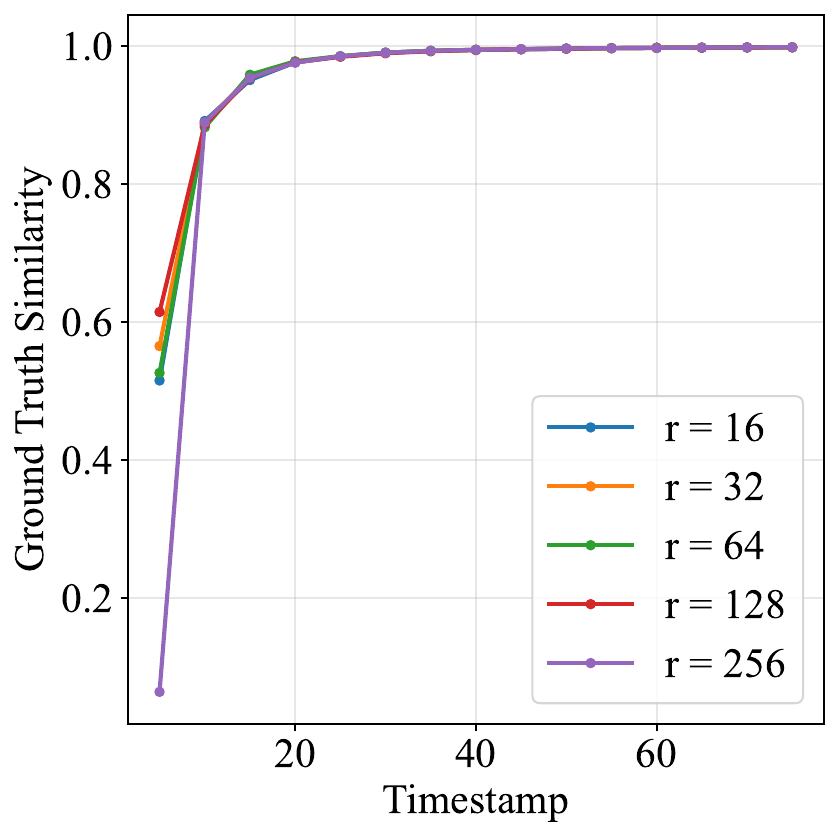}
        \caption{Similarity of golden subspace between AGOP and ground truth.}
        \label{fig:gt}
    \end{subfigure}
    \hfill
    \begin{subfigure}{0.235\textwidth}
        \centering
        \includegraphics[width=\linewidth]{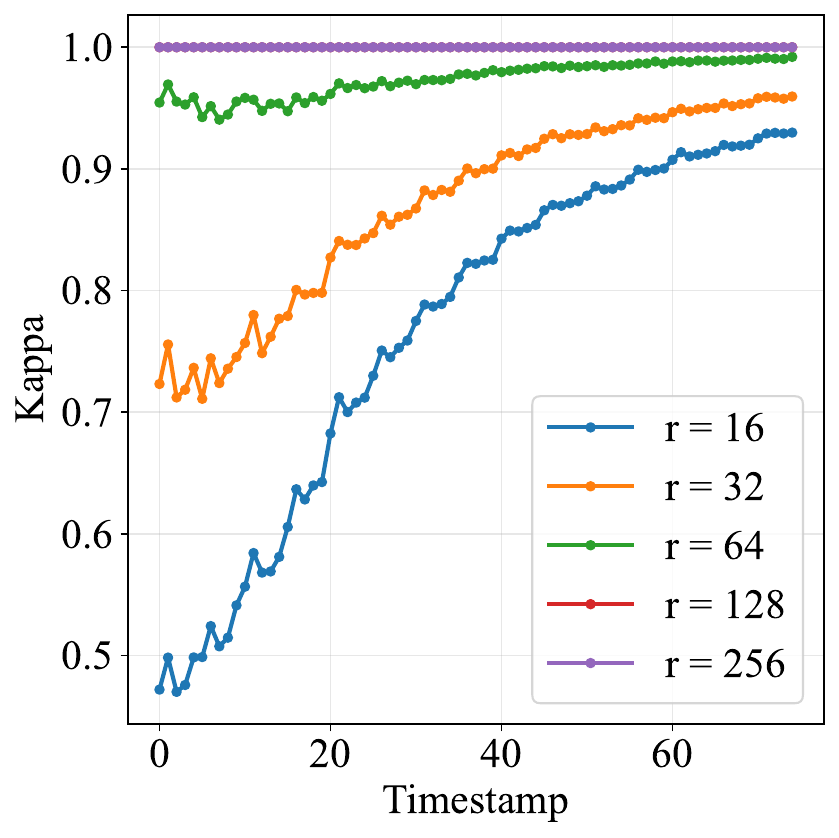}
        \caption{Cumulative spectral energy \(\kappa(k)\) of AGOP.}
        \label{fig:kappa}
    \end{subfigure}
    \caption{(a) Alignment between AGOP-derived subspace and source-derived ground-truth subspace as testing progresses. AGOP rapidly converges and remains highly aligned. (b) AGOP spectrum cumulative energy: the top 64–128 eigenvectors capture over 99\% of the energy, indicating strong low-rank concentration.}
    \label{fig:results}
\end{figure}

Under the CTTA setting, a \textbf{key difference} is that we cannot obtain the labels, so we choose to use pseudo-labels from high-confidence samples for calculation.
To verify the feasibility of this approach, we maintain an online AGOP estimator and perform eigenvalue decomposition to obtain an estimated single-step golden subspace.  
The ground-truth golden subspace is directly computed based on the definition in \Cref{eq: 1}.  
As shown in \Cref{fig:gt}, we visualize the temporal evolution of similarity between the AGOP-derived subspace and the true golden subspace.  
Although the AGOP estimation initially performs poorly after initialization, it quickly converges above \textbf{0.8} and eventually stabilizes above \textbf{0.98}.  
This behavior demonstrates that AGOP serves as an effective estimator of the golden subspace, ensuring \textbf{strong generalization} capability for our method.

To quantify the low-rank nature of the golden subspace, we measure the cumulative spectral energy
$\kappa(k) \;=\; \frac{\sum_{i=1}^k \lambda_i}{\sum_{i=1}^L \lambda_i},$ where \(\{\lambda_i\}\) are the eigenvalues of \(G\) in descending order. The curve in \Cref{fig:kappa} shows that more than \textbf{\(99\%\)} of the spectral energy of \(G\) is captured by only \textbf{\(64\!-\!128\)} eigenvectors, confirming a strong low-rank concentration. This empirical fact justifies guiding continual adaptation with a low-dimensional subspace: doing so \textbf{reduces computational cost} while tightly controlling representational drift.

\begin{figure*}[!t]
    \centering
    \includegraphics[width=0.65\linewidth]{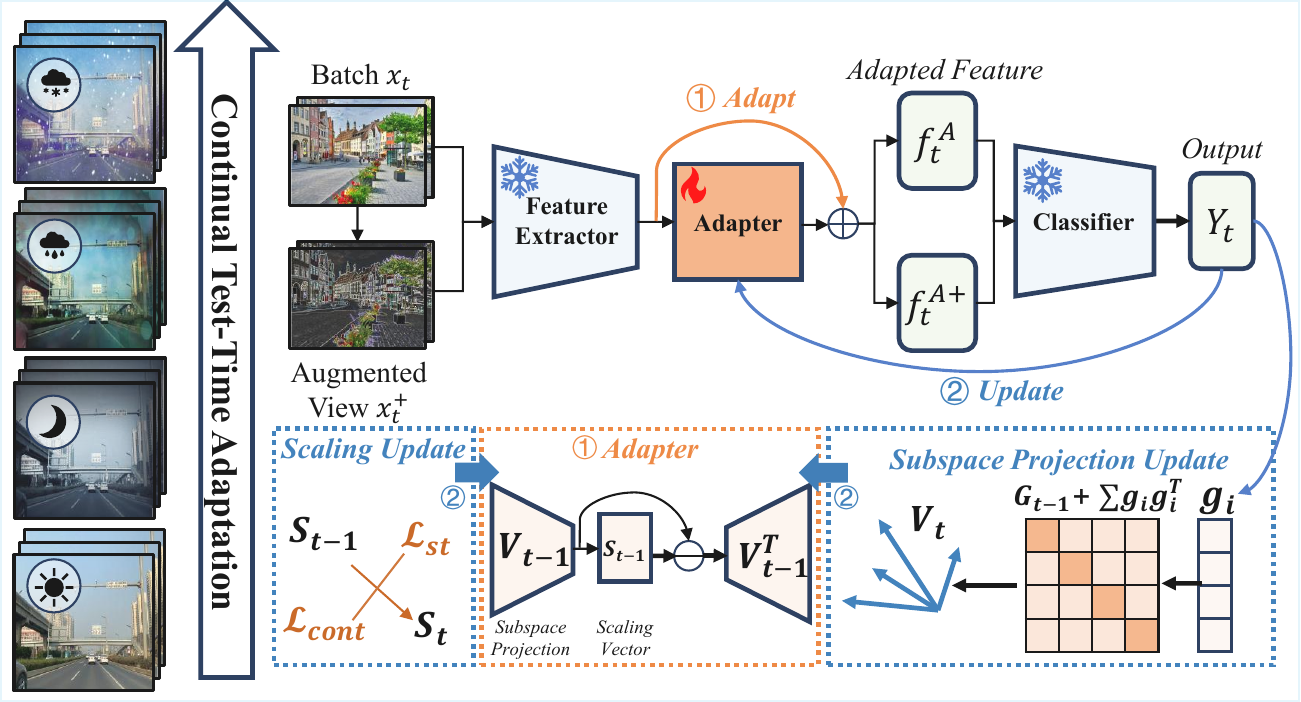}
    \caption{Overview of GOLD: it maintains an online AGOP estimator, from which the top-$r$ eigenvectors \(V_t\) are extracted via eigendecomposition to form the golden subspace. A residual low-rank adapter of rank \(r\) is then applied to learn a lightweight scaling vector \(S_t\). Meanwhile, a combination of self-training loss and prototype-based contrastive loss provides stable supervision.
    }
    \label{fig: method}
\end{figure*}

Finally, we observe that using a moderately low rank yields the best trade-off in practice: lower ranks lead to faster convergence and more efficient adaptation (as seen in Figure~\ref{fig:gt}), but an overly small rank fails to cover all important directions (Figure~\ref{fig:kappa}). Accordingly, all experiments in this work use a moderate subspace dimensionality of 64.

%% file: sec/4_method.tex
\section{Methodology}

Based on the above analysis, we reach two key insights: 1) the golden subspace indeed exists and can be directly obtained via eigendecomposition of the classifier weights; 2) the AGOP computed from test samples provides a powerful approximation of the classifier weight dynamics without retraining. Building on these findings, we propose Guided Online Low-rank Directional adaptation \textbf{(GOLD)}, as illustrated in \Cref{fig: method}.

GOLD maintains a lightweight auxiliary matrix \(G_t\), initialized from the inner-products of classifier weights and updated online using AGOP computed from high-confidence test samples. For each test batch, GOLD operates in two stages: in the \textbf{adapt stage}, backbone features are projected onto the golden subspace and refined via a compact scaling vector \(S\) for lightweight feature adaptation; in the \textbf{update stage}, the golden subspace is continually maintained through AGOP from confident samples, while the scaling vector \(S\) is optimized via self-training and prototype-based contrastive losses, enabling stable and source-free continual adaptation.

\subsection{Pretraining and Prototype Extraction}
\label{sec:4.1}

We first pretrain a base model \(f_{\theta}=h_{\psi}\circ g_{\phi}\) on the labeled source dataset \(\mathcal{D}_{\mathrm{src}}=\{(x_i^{\mathrm{src}},y_i^{\mathrm{src}})\}_{i=1}^{N_{\mathrm{src}}}\) using standard supervised losses (e.g., cross-entropy). After convergence the learned parameters \(\theta^{\ast}=(\phi^{\ast},\psi^{\ast})\) induce a discriminative feature space \(\mathbb{R}^{L}\) in which samples of the same class form compact clusters. The pretrained feature extractor \(g_{\phi^{\ast}}\) is then frozen for subsequent test-time adaptation.
For each class \(c\in\{1,\dots,C\}\) we compute a class prototype as the mean embedding of all source examples of that class:
\[
    P_{c}=\frac{1}{|\mathcal{I}_{c}|}\sum_{i\in\mathcal{I}_{c}} g_{\phi^{\ast}}(x_{i}^{\mathrm{src}})\in\mathbb{R}^{L},\qquad \mathcal{I}_{c}=\{i \mid y_{i}^{\mathrm{src}}=c\}.
\]
All class prototypes are collected into the prototype matrix \(P=[P_{1},P_{2},\dots,P_{C}]^{\top}\in\mathbb{R}^{C\times L}\). The matrix \(P\) serves as a set of \emph{semantic anchors}, namely, class-level reference points that preserve the source-domain semantic geometry. During deployment, we never access source samples. Instead, we only retain the pre-computed class prototypes extracted offline before adaptation, which serve as fixed semantic anchors throughout test-time adaptation.

\subsection{Subspace Projection and Adaptive Rescaling}
\label{sec:4.2}

For the $t$-th test batch we denote the input batch by $x_t\in\mathbb{R}^{B\times\cdot}$ and write the extracted features as
$F_t = g_{\phi^{\ast}}(x_t)\in\mathbb{R}^{B\times L},$
so that each row $f\in\mathbb{R}^L$ of $F_t$ corresponds to one example in the batch. We introduce a subspace projection matrix $V_t\in\mathbb{R}^{L\times r}$ and a scaling vector $S_t\in\mathbb{R}^r$. The $r$ columns of $V_t$ form a basis for a low-dimensional subspace of $\mathbb{R}^L$, and $S_t$ specifies element-wise modulation of the coordinates in that subspace.

Given a single feature $f\in\mathbb{R}^L$, we first project it onto the \textit{golden subspace} by
\[
    u \;=\; V_t^\top f \;\in\mathbb{R}^r,
\]
then apply an element-wise rescaling of the projected coordinates,
\[
    \tilde{u} \;=\; (1 + S_t)\odot u \;\in\mathbb{R}^r,
\]
where $\odot$ denotes the Hadamard (element-wise) product and $1\in\mathbb{R}^r$ is the all-ones vector so that $S_t=0$ yields the identity transform. The modulated coordinates are mapped back to the original feature space and added to the original feature in a residual manner:
\[
    \mathcal{A}(f) \;=\; f + V_t(\tilde{u}-u)
= f + V_t\big( S_t \odot (V_t^\top f)\big).
\]

This residual form preserves the original feature when $S_t=0$, which helps stabilize adaptation.

Applied to the whole batch $F_t\in\mathbb{R}^{B\times L}$, we obtain
\[
    F_t^{\mathrm{adapt}} 
    = F_t + \big(S_t\odot (F_t V_t)\big) V_t^\top.
\]

\noindent At this stage we obtain adapted features for prediction. How should we update the adapter's subspace and the scaling vector? Guided by the derivation in \Cref{sec:why_directions}, we propose an AGOP-based projection update in \Cref{sec:4.3} and a scaling-vector update driven by self-training and prototype-based contrastive losses in \Cref{sec:4.4}.

\subsection{AGOP-based Subspace Projection Update}
\label{sec:4.3}

Subspace projection aims to identify the directions in feature space that most require adaptation. 
Following the analysis in \Cref{sec:why_directions}, we initialize an auxiliary matrix \(G_0 = W^\top W\), which reflects the global geometry of the source classifier. 
To enable adaptation under shifting target domains, we then update this matrix online using the AGOP computed from current high-confidence samples. 
This online refinement allows the subspace to gradually incorporate target-domain information without retraining the classifier, ensuring that adaptation remains both efficient and stable.

For the current test batch \(x_t\) we extract features and compute logits on the adapted features:
\[
 F_t = g_{\phi^\ast}(x_t)\in\mathbb{R}^{B\times L},\qquad
Y_t = h_{\psi}\big(\mathcal{A}(F_t)\big)\in\mathbb{R}^{B\times C}.   
\]

For each sample i in the current batch, let
\[
    p_{t,i}=\max_{c}\ \text{Softmax}(Y_{t})_{i,c},
\]
denote its maximum predicted probability. We then select the high-confidence subset
\[
    \mathcal{M}_t=\{\,i\in[B]\mid p_{t,i}\ge\tau\},
\]
where \(\tau\) is a confidence threshold.

For each \(i\in\mathcal{M}_t\) we form a gradient surrogate given by the gradient of the top logit with respect to the feature:
\[
    g_i \;=\; \nabla_{f_i}\!\Big(\max_{c}\; h_{\psi}(f_i)_c\Big)\in\mathbb{R}^L,
\]
where \(f_i\) may be taken as the detached (pre-adaptation) feature to avoid unnecessary graph propagation.

The per-batch AGOP is constructed as
\[
    \widehat{G}_t^{(b)} \;=\; \frac{1}{|\mathcal{M}_t|}\sum_{i\in\mathcal{M}_t} g_i g_i^\top \in\mathbb{R}^{L\times L},
\]

Batch contributions are aggregated online via an exponential moving average:
\[
    G_t \;=\; (1-\alpha)G_{t-1} + \alpha\,\widehat{G}_t^{(b)},
\]
where \(\alpha\in(0,1]\) controls the update rate.

Every \(T_{\mathrm{eig}}\) batches we perform a symmetric eigendecomposition of \(G_t\) to extract the dominant subspace:
\[
   G_t = Q \Lambda Q^\top,\qquad \Lambda=\operatorname{diag}(\lambda_1,\dots,\lambda_L), 
\]
and select the top-\(r\) eigenvectors to form the subspace basis
\[
    V_t = [v_1,\dots,v_r]\in\mathbb{R}^{L\times r}.
\]
\(V_t\) thus denotes the low-rank subspace to which the adapter is restricted, enabling effective adaptation while substantially reducing parameter and computational overhead.

\begin{table*}[t]
\caption{Online classification error rates (\%) on CIFAR10-C, CIFAR100-C, and ImageNet-C under the CTTA setting. All methods are evaluated online at severity level 5, with our method's results averaged over 5 independent runs. The best performance for each corruption type is highlighted in \textbf{bold} and the next best is \underline{underlined}.}
\label{tab: main}
\centering
\begin{adjustbox}{max width=\textwidth}
\begin{tabular}{@{}ll*{17}{c}@{}}
\toprule
& \textbf{Method} & \textbf{Gau.} & \textbf{shot} & \textbf{imp.} & \textbf{def.} & \textbf{glass} & \textbf{mot.} & \textbf{zoom} & \textbf{snow} & \textbf{fro.} & \textbf{fog} & \textbf{bri.} & \textbf{con.} & \textbf{ela.} & \textbf{pix.} & \textbf{jpeg} & \cellcolor{gray!20}\textbf{Mean} \\
\midrule

\multirow{12}{*}{\rotatebox[origin=c]{90}{\textbf{CIFAR10-C}}}
 & TENT \cite{wang2020tent}       & 24.8 & 20.6 & 28.6 & 14.4 & 31.1 & 16.5 & 14.1 & 19.1 & 18.6 & 18.6 & 12.2 & 20.3 & 25.7 & 20.8 & 24.9 & \cellcolor{gray!20}20.7 \\
 & Ada \cite{chen2022contrastive} & 29.1 & 22.5 & 30.0 & 14.0 & 32.7 & 14.1 & 12.0 & 16.6 & 14.9 & 14.4 & 8.1  & 10.0 & 21.9 & 17.7 & 20.0 & \cellcolor{gray!20}18.5 \\
 & CoTTA \cite{wang2022continual} & 24.3 & 21.3 & 26.6 & 11.6 & 27.6 & \underline{12.2} & 10.3 & 14.8 & 14.1 & 12.4 & 7.6  & 10.6 & 18.3 & 13.4 & 17.3 & \cellcolor{gray!20}16.2 \\
 & RMT \cite{dobler2023robust}    & 24.0 & 20.4 & 25.6 & 12.6 & 25.4 & 14.2 & 12.2 & 15.4 & 15.1 & 14.1 & 10.3 & 13.7 & 17.1 & 13.5 & 16.0 & \cellcolor{gray!20}16.7 \\
 & DSS \cite{wang2024continual}   & 24.1 & 21.3 & 25.4 & 11.7 & 26.9 & \underline{12.2} & 10.5 & 14.5 & 14.1 & 12.5 & 7.8  & 10.8 & 18.0 & 13.1 & 17.3 & \cellcolor{gray!20}16.0 \\
 & EATA \cite{niu2022efficient}   & 24.2 & 19.1 & 27.6 & 12.5 & 29.2 & 14.2 & 12.0 & 16.3 & 15.1 & 15.1 & 8.9  & 13.5 & 21.2 & 16.4 & 21.9 & \cellcolor{gray!20}17.9 \\
 & SAR \cite{niutowards}          & 28.3 & 25.9 & 35.7 & 12.6 & 34.7 & 13.8 & 11.9 & 17.4 & 17.5 & 14.8 & 8.1  & 12.9 & 23.5 & 19.5 & 27.2 & \cellcolor{gray!20}20.3 \\
 & BeCoTTA \cite{lee2024becotta}  & 22.9 & 19.1 & 26.9 & \underline{10.2} & 27.5 & 12.7 & 10.4 & 14.7 & 14.3 & 12.4 & \textbf{7.2} & \underline{9.4} & 20.9 & 15.2 & 20.2 & \cellcolor{gray!20}16.3 \\
 & SANTA \cite{chakrabarty2023santa} & 23.9 & 20.1 & 28.0 & 11.6 & 27.4 & 12.6 & 10.2 & 14.1 & 13.2 & 12.2 & 7.6  & 10.3 & 19.1 & 13.3 & 18.5 & \cellcolor{gray!20}16.1 \\
 & OBAO \cite{zhu2024reshaping}   & \underline{22.2} & \underline{18.3} & \underline{25.0} & 10.7 & \underline{24.7} & \textbf{11.6} & \underline{10.0} & \underline{12.9} & \underline{12.7} & \underline{10.9} & 7.7 & 9.6 & \underline{15.7} & \underline{11.4} & \underline{15.2} & \cellcolor{gray!20}\underline{14.6} \\
 & GOLD                            & \textbf{21.4} & \textbf{18.1} & \textbf{24.5} & \textbf{10.0} & \textbf{24.4} & \textbf{11.6} & \textbf{9.9} & \textbf{12.5} & \textbf{12.3} & \textbf{10.5} & \underline{7.5} & \textbf{8.9} & \textbf{15.3} & \textbf{10.5} & \textbf{14.9} & \cellcolor{gray!20}\textbf{14.1} \\
\midrule

\multirow{14}{*}{\rotatebox[origin=c]{90}{\textbf{CIFAR100-C}}}
 & TENT \cite{wang2020tent}        & 37.2 & 35.8 & 41.7 & 37.9 & 51.2 & 48.3 & 48.5 & 58.4 & 63.7 & 71.1 & 70.4 & 82.3 & 88.0 & 88.5 & 90.4 & \cellcolor{gray!20}60.9 \\
 & Ada \cite{chen2022contrastive}  & 42.3 & 36.8 & 38.6 & 27.7 & 40.1 & 29.1 & 27.5 & 32.9 & 30.7 & 38.2 & 25.9 & 28.3 & 33.9 & 33.3 & 36.2 & \cellcolor{gray!20}33.4 \\
 & CoTTA \cite{wang2022continual}  & 40.1 & 37.7 & 39.7 & 26.9 & 38.0 & 27.9 & 26.4 & 32.8 & 31.8 & 40.3 & 24.7 & 26.9 & 32.5 & 28.3 & 33.5 & \cellcolor{gray!20}32.5 \\
 & RMT \cite{dobler2023robust}     & 40.5 & 36.1 & 36.3 & 27.7 & 33.9 & 28.5 & 26.4 & 29.0 & 29.0 & 32.5 & 25.1 & 27.4 & 28.2 & \underline{26.3} & 29.3 & \cellcolor{gray!20}30.4 \\
 & DSS \cite{wang2024continual}    & 39.7 & 36.0 & 37.2 & 26.3 & 35.6 & 27.5 & 25.2 & 31.4 & 30.0 & 37.8 & 24.2 & 26.0 & 30.0 & \underline{26.3} & 31.3 & \cellcolor{gray!20}30.9 \\
 & EATA \cite{niu2022efficient}    & 37.2 & \textbf{33.0} & 35.9 & 27.7 & 38.1 & 29.4 & 27.1 & 32.8 & 32.7 & 36.1 & 27.0 & 29.4 & 33.8 & 29.5 & 38.2 & \cellcolor{gray!20}32.5 \\
 & SAR \cite{niutowards}           & 40.5 & 34.8 & 37.1 & \textbf{25.6} & 37.2 & 28.0 & 25.5 & 31.9 & 30.9 & 35.8 & 25.2 & 27.8 & 31.8 & 29.0 & 37.2 & \cellcolor{gray!20}31.9 \\
 & BeCoTTA \cite{lee2024becotta}   & 42.1 & 38.0 & 42.2 & 30.2 & 42.9 & 31.7 & 29.8 & 35.1 & 33.9 & 38.5 & 27.9 & 32.0 & 36.7 & 31.6 & 39.9 & \cellcolor{gray!20}35.5 \\
 & SANTA \cite{chakrabarty2023santa} & \textbf{36.6} & \underline{33.2} & \underline{35.1} & \underline{25.9} & 34.9 & 27.7 & 25.4 & 29.5 & 29.9 & 33.1 & \underline{23.6} & 26.7 & 31.9 & 27.5 & 35.2 & \cellcolor{gray!20}30.4 \\
 & OBAO  \cite{zhu2024reshaping}                          & 38.8 & 35.0 & 36.6 & 26.7 & \underline{33.4} & 27.4 & \textbf{25.0} & \underline{28.6} & \underline{28.0} & \underline{31.2} & 24.1 & \textbf{25.1} & \underline{27.5} & \textbf{25.4} & \underline{29.1} & \cellcolor{gray!20}\underline{29.5} \\
 & GOLD                            & \underline{37.1} & 34.0 & \textbf{35.5} & 26.1 & \textbf{32.2} & \textbf{26.7} & \textbf{24.8} & \textbf{27.1} & \textbf{27.0} & \textbf{29.5} & \textbf{23.5} & \underline{25.2} & \textbf{26.8} & \textbf{24.6} & \textbf{28.3} & \cellcolor{gray!20}\textbf{28.6} \\
\midrule

\multirow{13}{*}{\rotatebox[origin=c]{90}{\textbf{ImageNet-C}}}
 & TENT \cite{wang2020tent}        & 81.6 & 74.6 & 72.7 & 77.6 & 73.8 & 65.5 & 55.3 & 61.6 & 63.0 & 51.7 & 38.2 & 72.1 & 50.8 & 47.4 & 53.3 & \cellcolor{gray!20}62.6 \\
 & Ada \cite{chen2022contrastive}  & 82.9 & 80.9 & 78.4 & 81.4 & 78.7 & 72.9 & 64.0 & 63.5 & 64.5 & 53.5 & 38.4 & 66.7 & 54.6 & 49.4 & 53.0 & \cellcolor{gray!20}65.5 \\
 & CoTTA \cite{wang2022continual}  & 84.7 & 82.1 & 80.6 & 81.3 & 79.0 & 68.6 & 57.5 & 60.3 & 60.5 & 48.3 & 36.6 & 66.1 & \textbf{47.3} & 41.2 & 46.0 & \cellcolor{gray!20}62.7 \\
 & RMT \cite{dobler2023robust}     & 80.2 & 76.4 & 74.5 & 77.1 & 74.4 & 66.2 & 57.6 & 57.0 & \underline{59.1} & 48.0 & 39.1 & \underline{60.6} & \textbf{47.3} & 42.5 & \textbf{43.4} & \cellcolor{gray!20}60.2 \\
 & DSS \cite{wang2024continual}    & 82.3 & 78.4 & 76.7 & 81.9 & 77.8 & 66.9 & 60.9 & \textbf{50.8} & 60.9 & 47.7 & 35.4 & 69.0 & 47.5 & \textbf{40.9} & 46.2 & \cellcolor{gray!20}62.2 \\
 & EATA \cite{niu2022efficient}    & \underline{75.7} & \textbf{66.7} & \textbf{65.4} & \underline{74.1} & \textbf{69.2} & 64.2 & 57.5 & 63.0 & 62.8 & 53.4 & 41.3 & 64.0 & 50.5 & 47.2 & 50.0 & \cellcolor{gray!20}60.4 \\
 & SAR \cite{niutowards}           & 82.0 & 74.0 & 71.9 & 77.4 & 73.6 & 66.1 & 56.2 & 61.3 & 63.1 & 51.2 & 37.1 & 69.0 & 49.8 & 46.0 & 51.2 & \cellcolor{gray!20}62.0 \\
 & BeCoTTA \cite{lee2024becotta}   & 84.1 & 74.3 & 72.2 & 77.4 & 71.9 & \underline{63.4} & \underline{55.1} & 57.2 & 61.2 & 50.7 & 36.4 & 66.1 & 49.2 & 45.6 & 48.4 & \cellcolor{gray!20}60.9 \\
 & SANTA \cite{chakrabarty2023santa} & \textbf{74.1} & 72.9 & 71.6 & 75.7 & 74.1 & 64.2 & 55.5 & \underline{55.6} & 62.9 & \textbf{46.6} & \textbf{34.1} & 69.9 & 50.6 & 44.3 & 48.5 & \cellcolor{gray!20}60.1 \\
 & OBAO \cite{zhu2024reshaping}                            & 78.5 & 75.3 & 73.0 & 75.7 & 73.1 & 64.5 & 56.0 & 55.8 & \textbf{58.1} & \underline{47.6} & 40.5 & \textbf{60.1} & \underline{47.4} & 43.9 & \underline{45.8} & \cellcolor{gray!20}\underline{59.6} \\
 & GOLD                            & 76.6 & \underline{72.4} & \underline{70.9} & \textbf{73.3} & \underline{71.4} & \textbf{62.2} & \textbf{54.9} & 57.7 & 60.3 & 48.5 & 38.7 & 61.8 & 48.6 & 45.3 & 47.8 & \cellcolor{gray!20}\textbf{59.3} \\
\bottomrule
\end{tabular}
\end{adjustbox}
\end{table*}

\subsection{Scaling-Vector Update}
\label{sec:4.4}

To provide more reliable supervision for updating the scaling vector, we follow recent CTTA works \cite{zhu2024reshaping,zhang2025analytic,liu2025efficient} and adopt an EMA teacher model. 
Let the student logits for the current batch be $Y_t = h_{\psi}\big(\mathcal{A}(g_{\phi^\ast}(x_t))\big)$ and let the teacher logits be $Y_t^{\mathrm{ema}} = h_{\psi}^{\mathrm{ema}}\big(g_{\phi^\ast}(x_t)\big)$. We also denote by \(Y_t^{+}=h_{\psi}\big(\mathcal{A}(F_t^{+})\big)\) the logits computed on an augmented view \(x_t^{+}\) of the same batch. Using both the original and an augmented view encourages the model to produce predictions that are invariant to realistic input perturbations and thus reduces overfitting to spurious features.

\noindent\textbf{Self-training consistency loss.} The EMA teacher provides a stable target for both the original and augmented views. We use a SCE (Symmetric Cross Entropy) loss \cite{wang2019symmetric}:
\[
    \mathcal{L}_{\mathrm{st}} \;=\; \tfrac{1}{2}\,\mathrm{SCE}\big(Y_t,\,Y_t^{\mathrm{ema}}\big)
\;+\; \tfrac{1}{2}\,\mathrm{SCE}\big(Y_t^{+},\,Y_t^{\mathrm{ema}}\big).
\]
Intuitively, the first term enforces that the student prediction on the original image matches the stable EMA target, while the second term enforces that an augmented view produces the same stable prediction. These terms improve robustness and reduce confirmation bias from noisy pseudo-labels.

\noindent\textbf{Prototype-based contrastive loss.} To further anchor adapted features to source semantics, we form a prototype-based contrastive objective. For each sample \(i\) we find the nearest source prototype $k(i)$ by cosine similarity.
Using the triplet \([P_{k(i)}, f_i, f_i^{+}]\) (where \(f_i^{+}\) is the feature of the augmented view), we encourage both the original and augmented features to align with the chosen prototype while being separated from other prototypes. A practical instantiation uses an InfoNCE-style loss \cite{wang2021understanding,parulekar2023infonce} averaged over the two views:
\begin{align}
\mathcal{L}_{\mathrm{cont}} & =  -\frac{1}{2|\mathcal{B}|}\sum_{i\in\mathcal{B}}
\Bigg[ \log\frac{\exp\big(\mathrm{sim}(f_i,P_{k(i)})/\kappa\big)}
{\sum_{c}\exp\big(\mathrm{sim}(f_i,P_c)/\kappa\big)} \nonumber \\
& + \log\frac{\exp\big(\mathrm{sim}(f_i^{+},P_{k(i)})/\kappa\big)}
{\sum_{c}\exp\big(\mathrm{sim}(f_i^{+},P_c)/\kappa\big)}
\Bigg], \nonumber 
\end{align}
where \(\mathrm{sim}(u,v)=\frac{u^\top v}{\|u\|\|v\|}\) is cosine similarity, \(\kappa>0\) is a temperature, and \(\mathcal{B}\) denotes the set of samples used for contrastive supervision. This loss pulls both views toward the same source prototype and discourages drifting away from source semantics; using the augmented view in the pair improves invariance to input transformations.

\noindent\textbf{Total loss and online update.} The overall objective combines the self-training term and the prototype contrastive term (weighted by tunable coefficients):
\[
    \mathcal{L} \;=\; \lambda_{\mathrm{trg}}\mathcal{L}_{\mathrm{st}} \;+\; \lambda_{\mathrm{cont}}\mathcal{L}_{\mathrm{cont}}.
\]
We perform a \textbf{single gradient step} on the scaling vector \(S_t\) and a small set of batch-norm parameters \(\theta\).

%% file: sec/5_exp.tex
\section{Experiment}

We evaluate the performance of GOLD on both image classification and semantic segmentation tasks, while also assessing its adaptation efficiency.  
\textit{Additional experiments presented in the supplementary material} include parameter robustness analysis, performance across different batch sizes, and additional factors.

\subsection{Dataset and Settings}

Following \cite{ni2025maintaining,wang2022continual}, we evaluate GOLD on CIFAR10-C, CIFAR100-C, and ImageNet-C \cite{hendrycks2019benchmarking}, which are standard robustness benchmarks containing 15 corruption types at severity levels 1--5. We use the clean training sets of CIFAR10, CIFAR100, and ImageNet \cite{krizhevsky2009learning} as source domains, and their corrupted counterparts as target domains. The model adapts sequentially to all 15 corruption types at severity level 5 in an online manner without domain change notification. For each corruption, we use 10,000 images for the CIFAR-based datasets and 5,000 for ImageNet-C.

We strictly adhere to the CTTA protocol without accessing any source data. Our method is compared against both established and state-of-the-art CTTA approaches, all evaluated online under the same conditions. For all datasets, we use the maximum corruption severity level of 5. Model predictions are generated before adapting to the current test stream.
Following CoTTA, we employ standard pre-trained models as our source models: WideResNet-28 \cite{zagoruyko2016wide} for CIFAR10-C, ResNeXt-29 \cite{xie2017aggregated} for CIFAR100-C, and ResNet-50 \cite{he2016deep} for ImageNet-C. Comprehensive descriptions of baseline methods and implementation details are provided in the supplementary material.

\begin{figure}[t]
    \centering
    \includegraphics[width=0.85\linewidth]{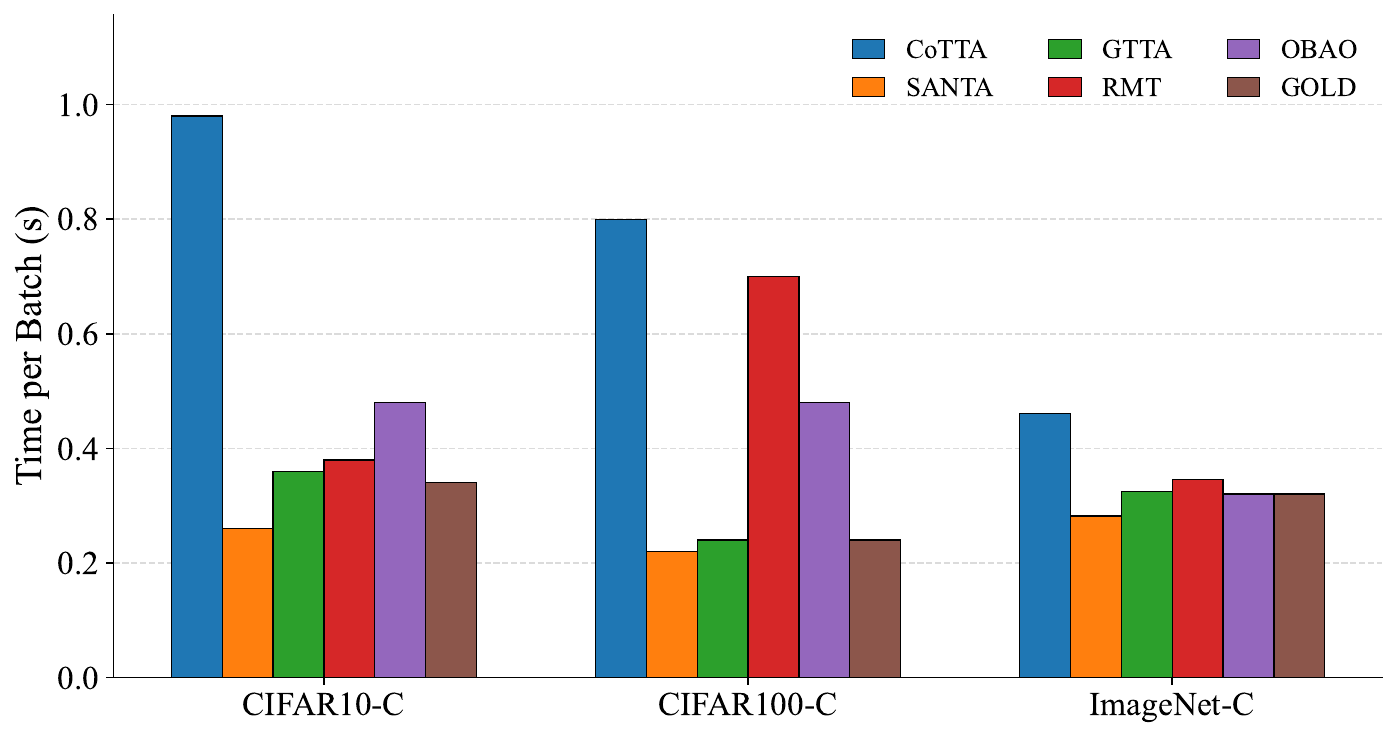}
    \caption{Comparison of method efficiency. The vertical axis shows the time required for each method to perform adaptation and prediction in each test batch.}
    \label{fig:time_bar}
\end{figure}

\subsection{Main Results}

Table \ref{tab: main} presents the classification error rates (lower is better) of various methods under the CTTA setting on three benchmark datasets, all evaluated at the highest corruption severity level 5. Our proposed GOLD establishes state-of-the-art performance across all benchmarks, demonstrating consistent effectiveness in handling diverse corruption types. On CIFAR10-C, GOLD achieves a mean error rate of 14.1\%, outperforming all competing methods and demonstrating particularly strong results on geometrically challenging corruptions like defocus blur and motion blur. The advantage is more pronounced on CIFAR100-C, where GOLD substantially surpasses CoTTA and TENT, indicating superior capability in handling fine-grained classification tasks. For ImageNet-C, GOLD maintains this competitive edge with the best overall performance, showing consistent robustness across datasets of varying scales. These results collectively validate GOLD's effectiveness in diverse continuous test-time adaptation scenarios.

As shown in \Cref{tab: aba}, we conduct an ablation study to investigate the effects of golden subspace estimation and loss components. For subspace construction, we compare three variants: (1) without subspace projection, (2) using the eigendecomposition of \(W^{\top}W\) for initialization (as motivated by \Cref{eq: 1}), and (3) employing the proposed AGOP-based online update. The results demonstrate that subspace projection effectively suppresses excessive parameter updates and stabilizes adaptation. Using the eigenspace of \(W^{\top}W\) as a naive initialization validates our theoretical insight and serves as a fast, stable warm start for adaptation. Furthermore, the AGOP-based online update continuously refines the golden subspace, enabling the model to dynamically align with evolving domain statistics and achieve superior long-term performance. For the scaling-vector update, we also evaluate the impact of the contrastive loss \(\mathcal{L}_{\mathrm{cont}}\), which further enhances model robustness and helps preserve the source-domain semantic structure, effectively mitigating feature drift during continual adaptation.

\Cref{fig:time_bar} shows the per-batch processing time of different methods across various benchmarks.  
Even as the test data becomes more challenging, GOLD consistently maintains an average runtime around \textbf{0.25} seconds, which is comparable to the fastest existing method SANTA \cite{chakrabarty2023santa}, while achieving substantially better performance.

\begin{table}[t]
\caption{
Ablation studies of two components ($G_t$ and $S_t$) on the CIFAR10-C, CIFAR100-C, and ImageNet-C datasets.
}
\label{tab: aba}
\resizebox{\linewidth}{!}{%
\begin{tabular}{cccccc}
\hline
$W^TW$ init & AGOP & $\mathcal{L}_{\mathrm{cont}}$ & CIFAR10-C & CIFAR100-C & ImageNet-C \\ \hline
-           & -           & -                             & 16.24     & 29.87      & 62.45      \\
$\surd$     & -           & -                             & 15.11     & 29.15     & 60.24      \\
$\surd$     &   -   & $\surd$                       & 14.64     & 28.63  & 60.01      \\
$\surd$     & $\surd$     & -                       & 15.06     &  28.77          & 59.87      \\
- & $\surd$ & $\surd$ & 14.32 & 28.61 & 59.52\\
$\surd$     & $\surd$     & $\surd$                       & 14.12     & 28.56   & 59.32     \\ \hline
\end{tabular}
}

\end{table}

\subsection{Experiments on Segmentation CTTA}
\begin{figure*}[t]
    \centering
    \includegraphics[width=0.95\linewidth]{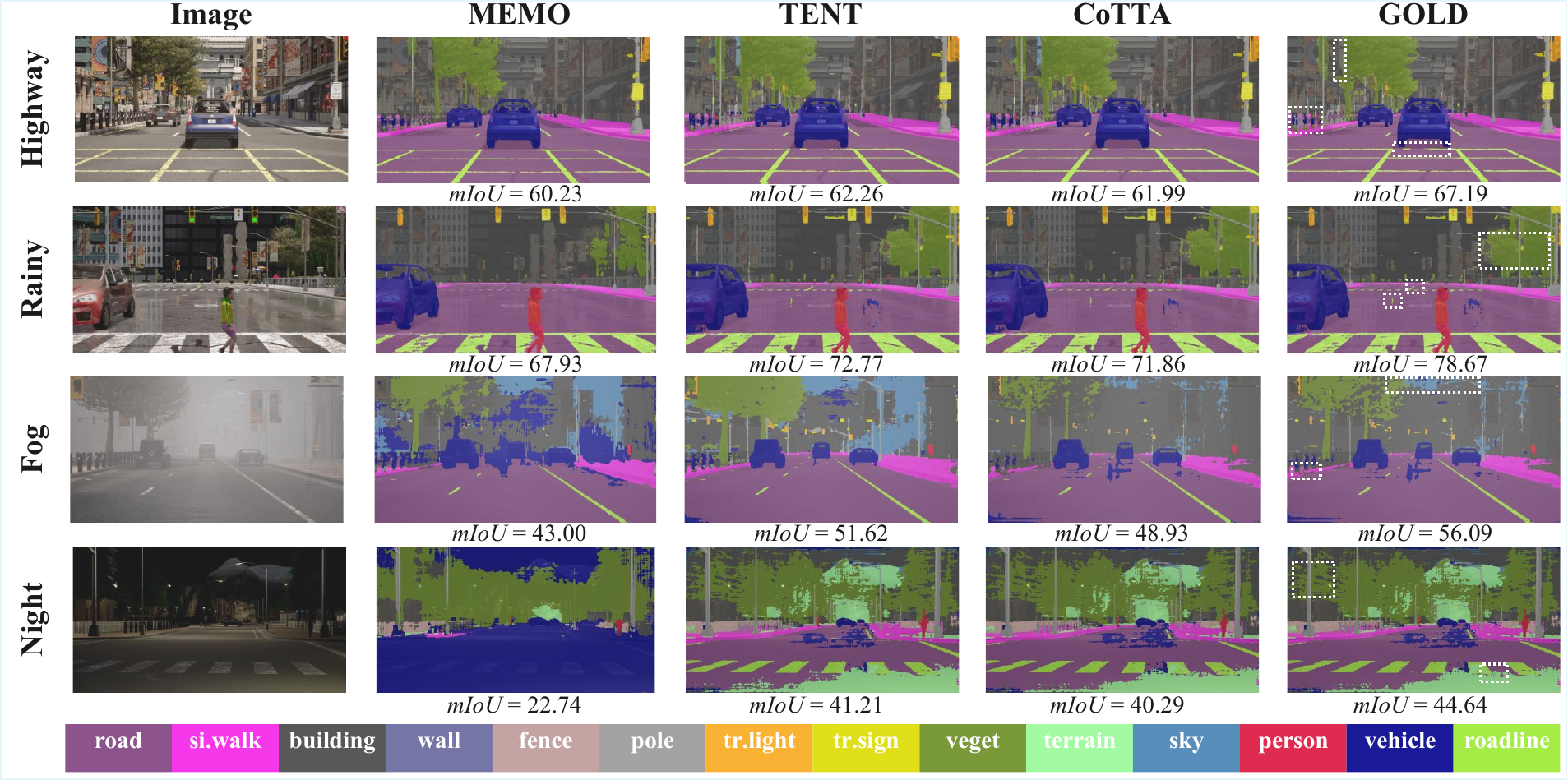}
    \caption{A visual comparison of segmentation results from different adaptive methods on the CarlaTTA sequence is presented. White boxes indicate areas where GOLD has made progress, and magnified images are shown in the supplementary material.}
    \label{fig: seg}
    
\end{figure*}

Under the CarlaTTA benchmark \cite{marsden2024introducing}, a synthetic dataset built on the CARLA simulator \cite{dosovitskiy2017carla} for evaluating gradual test-time adaptation in urban scene segmentation, we simulate five types of continuously evolving and unknown real-world environmental conditions. These include: \textit{day2night} (day to night), \textit{clear2fog} (clear to foggy), \textit{clear2rain} (clear to rainy), \textit{dynamic} (combined varying conditions), and \textit{highway} (transition from urban to highway scenes, which also introduces label distribution shift). The objective of this experiment is to assess the model’s online adaptation capability under these complex and continuously changing driving scenarios. Evaluation is performed by computing the \textbf{mean Intersection-over-Union} (mIoU) over the entire test sequence. \textit{Detailed experimental configurations are provided in the supplementary material.}

Quantitative results are presented in \Cref{tab: seg}. Compared with strong baselines including Source, MEMO, TENT, and CoTTA, our proposed GOLD method achieves competitive performance across all five domain shift scenarios. Specifically, GOLD obtains the highest mIoU on three sequences: \textit{day2night}, \textit{clear2fog}, and \textit{highway}. Notably, on the challenging \textit{highway} sequence which involves both covariate and label distribution shifts, GOLD outperforms all other methods by a clear margin. While CoTTA shows strong results on \textit{highway}, GOLD achieves a further improvement of 0.7\%. These results demonstrate that GOLD, as a lightweight approach, achieves performance comparable to CoTTA while being more computationally efficient.

\textbf{Visual comparisons} are provided in \Cref{fig: seg}, which showcases segmentation examples under different adaptation methods. GOLD captures fine-grained details that previous methods fail to recover. For instance, in the \textit{highway} scenario, GOLD accurately identifies road markings partially occluded by vehicle shadows, while in the \textit{clear2rain} sequence it preserves lane boundaries even under heavy rain streaks. Moreover, in adverse visibility conditions such as \textit{clear2fog} and \textit{day2night}, GOLD exhibits improved recognition of distant scene structures and stable segmentation quality. This advantage arises from the AGOP-guided adaptation process, which rapidly aligns feature representations along important gradient directions, enabling the model to perform fast and effective updates without overfitting transient noise. Consequently, GOLD achieves more reliable adaptation to evolving visual domains while preserving the underlying semantic structure of the scene.

\begin{table}[t]
\caption{Quantitative results of semantic segmentation under gradual domain shifts on CarlaTTA. We report mIoU (\%) on five test sequences. Best results are in \textbf{bold}, second best are \underline{underlined}.}
\label{tab: seg}
\resizebox{\linewidth}{!}{%
\begin{tabular}{cccccc}
\hline
       & day2night     & clear2fog     & clear2rain    & dynamic       & highway       \\ \hline
Source & 58.4          & 52.8          & \textbf{71.8} & 46.2          & 24.7          \\
MEMO \cite{zhang2022memo}  & 61.0          & 55.1          & \underline{71.6}    & 36.7          & 23.6          \\
TENT \cite{wang2020tent} & \underline{61.5}    & 56.0          & 70.9          & \textbf{50.8} & 32.8          \\
CoTTA \cite{wang2022continual} & 61.4          & 56.8          & 70.7          & 46.3          & \underline{33.8}    \\
GOLD   & \textbf{61.8} & \textbf{57.1} & 71.0          & \underline{48.9}    & \textbf{34.5} \\ \hline
\end{tabular}%
}

\end{table}

%% file: sec/6_con.tex
\section{Conclusion} 
We presented GOLD, a source-free CTTA framework that performs adaptation through structured feature updates within a low-rank golden subspace. By combining an online AGOP-based subspace estimate with a lightweight scaling adapter, GOLD achieves a strong balance between adaptation efficiency and robustness across classification and segmentation benchmarks. We hope this perspective motivates future work on structure-aware online adaptation beyond global parameter updates.

%% file: sec/X_suppl.tex
\clearpage
\section*{Appendix}

\setcounter{table}{0}
\renewcommand{\thetable}{S\arabic{table}}
\setcounter{figure}{0}
\renewcommand{\thefigure}{S\arabic{figure}}
\setcounter{section}{0}
\renewcommand{\thesection}{\Alph{section}}

The supplementary material is organized as follows.
\begin{itemize}
    \item Appendix \ref{sec:s1} introduces the notation used throughout the paper.
    \item Appendix \ref{sec:s2} provides the full theoretical proof of the existence of the golden subspace.
    \item Appendix \ref{sec:s3} presents additional experimental details and results, including the detailed experimental setup, analyses under different batch sizes and hyperparameter settings, comprehensive efficiency analysis, additional qualitative segmentation results, and long-term generalization experiments.
\end{itemize}

\section{Notations}\label{sec:s1}

To facilitate reading, \Cref{tab:notation} provides the mathematical symbols that appear throughout the text and their meanings.

\begin{table}[ht]
\centering
\small
\caption{Notation Summary.}
\label{tab:notation}
\resizebox{\linewidth}{!}{%
\begin{tabular}{cc}
\hline
\textbf{Symbol} & \textbf{Meaning} \\
\hline
$D_S$ & Labeled source-domain dataset \\
$D_T$ & Unlabeled target-domain data stream \\
$X_t$ & Mini-batch of target samples at step $t$ \\
$g_\theta$ & Full model composed of feature extractor and classifier \\
$g_\phi$ & Pretrained feature extractor (frozen backbone) \\
$h_\psi$ & Classifier head producing logits \\
$F_t$ & Feature matrix of batch $t$ \\
$f$ & Feature vector of a single sample \\
$W$ & Linear classifier weight matrix \\
$P_c$ & Prototype of class $c$ in the source domain \\
$P$ & Matrix of all class prototypes \\
$V_t$ & Basis of the adaptive subspace (top-$r$ eigenvectors) \\
$S_t$ & Learnable scaling vector in the subspace \\
$u$ & Coordinates of feature $f$ in the subspace \\
$\tilde{u}$ & Scaled coordinates after subspace adaptation \\
$A(f)$ & Adapter transformation applied to feature $f$ \\
$F_t^{\text{adapt}}$ & Adapted feature matrix for batch $t$ \\
$\hat{G}^{(b)}_t$ & Mini-batch estimate of the AGOP matrix \\
$g_i$ & Gradient proxy with respect to feature $f_i$ \\
$M_t$ & Set of high-confidence samples in batch $t$ \\
$G_t$ & Exponential moving average (EMA) of AGOP \\
$Q,\Lambda$ & Eigenvectors and eigenvalues of $G_t$ \\
$\kappa(k)$ & Cumulative spectral energy ratio \\
$Y_t$ & Student model logits \\
$Y_t^{ema}$ & EMA teacher logits \\
$Y_t^+$ & Logits from augmented views \\
$\mathcal{L}_{st}$ & EMA consistency loss \\
$\mathcal{L}_{cont}$ & Prototype-based contrastive loss \\
$\mathcal{L}$ & Total training objective \\
$\theta^{ema}$ & Parameters of the EMA teacher model \\
$\alpha, \tau, r$ & EMA decay, confidence threshold, subspace rank \\
$m$ & Momentum coefficient for EMA updates \\
\hline
\end{tabular}
}
\end{table}

\section{Detailed Analysis of GOLD}\label{sec:s2}

\subsection{Proof: Existence of the golden subspace}

We begin by formalizing the claim that the minimal feature perturbation required to produce a prescribed logit change must lie in the row-space of the linear classifier. This statement underlies the notion of a ``golden subspace'' for test-time feature adaptation.

\begin{assumption}[Linear classifier]
The classification head is linear in features: for any feature vector \(f\in\mathbb{R}^L\) the pre-softmax logit vector satisfies
\[
z = W f,
\]
where \(W\in\mathbb{R}^{C\times L}\) is the classifier weight matrix (the last fully-connected layer).
\end{assumption}

\begin{proposition}[Minimal-norm feature change]
Given a desired logit change \(\Delta y\in\mathbb{R}^C\), the constrained minimization
\[
\min_{\Delta f\in\mathbb{R}^L}\ \tfrac{1}{2}\|\Delta f\|_2^2
\quad\text{s.t.}\quad W\Delta f=\Delta y
\]
has the unique minimal-norm solution
\[
\Delta f^\star = W^{+}\Delta y,
\]
where \(W^{+}\) denotes the Moore--Penrose pseudoinverse of \(W\). In particular, \(\Delta f^\star\in\mathrm{row}(W)\), i.e. the column space of \(W^\top\).
\end{proposition}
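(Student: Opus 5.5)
The plan is to use the orthogonal decomposition $\mathbb{R}^L=\mathrm{row}(W)\oplus\ker(W)$ and a Pythagorean argument, and then to identify the minimizer with $W^{+}\Delta y$ through the defining properties of the Moore--Penrose pseudoinverse.

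First I would make the implicit feasibility hypothesis explicit. The constraint $W\Delta f=\Delta y$ has a solution only if $\Delta y\in\mathrm{col}(W)$. This holds automatically when $W$ has full row rank $C\le L$, which is the regime described in Section~\ref{sec:why_directions}. I will assume it throughout, and treat the infeasible case separately at the end.

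\textbf{Existence and uniqueness of the minimizer.} Write any feasible $\Delta f$ as $p+n$ with $p\in\mathrm{row}(W)=\mathrm{col}(W^\top)$ and $n\in\ker(W)$; these two subspaces are orthogonal complements. Then $W\Delta f=Wp$, so feasibility depends only on $p$.

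Next I claim $W$ restricted to $\mathrm{row}(W)$ is injective. If $p,p'\in\mathrm{row}(W)$ satisfy $Wp=Wp'$, then $p-p'\in\mathrm{row}(W)\cap\ker(W)=\{0\}$. Hence every feasible point has the same component $p^\star$ in $\mathrm{row}(W)$.

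By Pythagoras, $\|\Delta f\|_2^2=\|p^\star\|_2^2+\|n\|_2^2$. This is minimized exactly when $n=0$, so the unique minimizer is $\Delta f^\star=p^\star\in\mathrm{row}(W)$, which already gives the ``in particular'' clause. Strict convexity of the objective on the affine feasible set gives the same uniqueness conclusion, as a cross-check.

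\textbf{Identifying $p^\star$ with $W^{+}\Delta y$.} I would use two standard facts about the pseudoinverse.
\begin{itemize}
\item $WW^{+}$ is the orthogonal projector onto $\mathrm{col}(W)$. Since $\Delta y\in\mathrm{col}(W)$, this gives $W(W^{+}\Delta y)=\Delta y$, so $W^{+}\Delta y$ is feasible.
\item $W^{+}=W^\top(WW^\top)^{+}$, which can be checked via the SVD $W=U\Sigma V^\top$, $W^{+}=V\Sigma^{+}U^\top$. This shows $W^{+}\Delta y\in\mathrm{col}(W^\top)=\mathrm{row}(W)$.
\end{itemize}
So $W^{+}\Delta y$ is a feasible point lying in $\mathrm{row}(W)$. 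By the uniqueness of $p^\star$ established above, it equals $\Delta f^\star$.

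\textbf{Main obstacle: the feasibility hypothesis.} This is the only step that is not routine. If $\Delta y\notin\mathrm{col}(W)$, the constraint set is empty and the statement as written is vacuous. I would then reinterpret the problem as the minimum-norm least-squares problem $\min\|\Delta f\|$ over $\arg\min\|W\Delta f-\Delta y\|$. The same argument goes through with $\Delta y$ replaced by its projection $WW^{+}\Delta y$ onto $\mathrm{col}(W)$: the least-squares minimizers are exactly the feasible points for this projected target. The answer is again $W^{+}\Delta y\in\mathrm{row}(W)$, which is consistent with \Cref{eq: 1} and the rank bound \Cref{eq: 2}.
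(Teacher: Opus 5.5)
Your proof is correct, but it is built around a different core argument than the paper's. The paper derives the answer constructively from the Lagrangian. Stationarity gives $\Delta f=-W^\top\lambda$, the constraint then reads $-(WW^\top)\lambda=\Delta y$, and the paper solves this as $\lambda=-(WW^\top)^{+}\Delta y$ to reach $W^\top(WW^\top)^{+}\Delta y=W^{+}\Delta y$. Only at the end does it use the split $\Delta f=\Delta f^\star+v$ with $v\in\ker(W)$, together with Pythagoras, to get minimality and uniqueness. You instead make the orthogonal decomposition $\mathrm{row}(W)\oplus\ker(W)$ the whole proof. Injectivity of $W$ on $\mathrm{row}(W)$ forces all feasible points to share one row-space component $p^\star$. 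Pythagoras then makes $p^\star$ the unique minimizer, so the row-space clause comes for free. After that you only need to \emph{verify} that $W^{+}\Delta y$ is feasible and lies in $\mathrm{row}(W)$.

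The Lagrangian route has the advantage of deriving the formula rather than checking a known answer. As written, though, it silently requires $\Delta y\in\mathrm{col}(W)$; otherwise $\lambda=-(WW^\top)^{+}\Delta y$ does not solve $-(WW^\top)\lambda=\Delta y$, and $W^{+}\Delta y$ is infeasible. Also, stationarity alone only produces a candidate. It is the closing Pythagorean remark that certifies optimality (convexity would too, but the paper does not invoke it).

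Your route needs no multipliers and states the feasibility hypothesis explicitly. Your least-squares reinterpretation for $\Delta y\notin\mathrm{col}(W)$ is a correct extension beyond the stated claim. In that step, note that concluding $W^{+}(WW^{+}\Delta y)=W^{+}\Delta y$ uses the Penrose identity $W^{+}WW^{+}=W^{+}$.
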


\begin{proof}
Form the Lagrangian
\[
\mathcal{L}(\Delta f,\lambda)=\tfrac{1}{2}\|\Delta f\|_2^2 + \lambda^\top (W\Delta f-\Delta y),
\qquad \lambda\in\mathbb{R}^C.
\]
Stationarity in \(\Delta f\) yields \(\Delta f + W^\top\lambda = 0\), hence \(\Delta f = -W^\top\lambda\).
Inserting this into the constraint gives \(-(W W^\top)\lambda=\Delta y\). Solving (via the pseudoinverse on the potentially rank-deficient matrix \(W W^\top\)) yields
\[
\lambda = -(W W^\top)^{+}\Delta y,
\]
and therefore
\[
\Delta f^\star = -W^\top (W W^\top)^{+}\Delta y = W^{+}\Delta y,
\]
using the identity \(W^{+}=W^\top (W W^\top)^{+}\). Equivalently, if \(W=U\Sigma V^\top\) is a compact SVD, then
\[
\Delta f^\star = V\Sigma^{+} U^\top \Delta y,
\]
which has no component in \(\ker(W)\) and is the unique minimal-norm feasible vector. Uniqueness follows because any feasible \(\Delta f\) can be decomposed as \(\Delta f=\Delta f^\star+v\) with \(v\in\ker(W)\), and \(\|\Delta f\|_2^2=\|\Delta f^\star\|_2^2+\|v\|_2^2\ge\|\Delta f^\star\|_2^2\).
\end{proof}

\noindent\textbf{Batch extension.}
The same argument applies columnwise: for a batch of \(B\) samples let \(\Delta Y\in\mathbb{R}^{C\times B}\) and \(\Delta F\in\mathbb{R}^{L\times B}\). The Frobenius-norm constrained problem
\[
\min_{\Delta F}\ \tfrac{1}{2}\|\Delta F\|_F^2\quad\text{s.t.}\quad W\Delta F=\Delta Y
\]
has the solution \(\Delta F^\star = W^{+}\Delta Y\). Thus the minimal collective feature change again lies in the row-space of \(W\).

\vspace{6pt}
The implication is direct: to realize logit (or prediction) corrections while minimally perturbing features, it suffices to search for updates inside the subspace spanned by the rows of \(W\) (equivalently the columns of \(W^\top\)). We refer to this subspace as the \emph{golden subspace}. 
\begin{figure*}[t]
    \centering
    \includegraphics[width=1.00\linewidth]{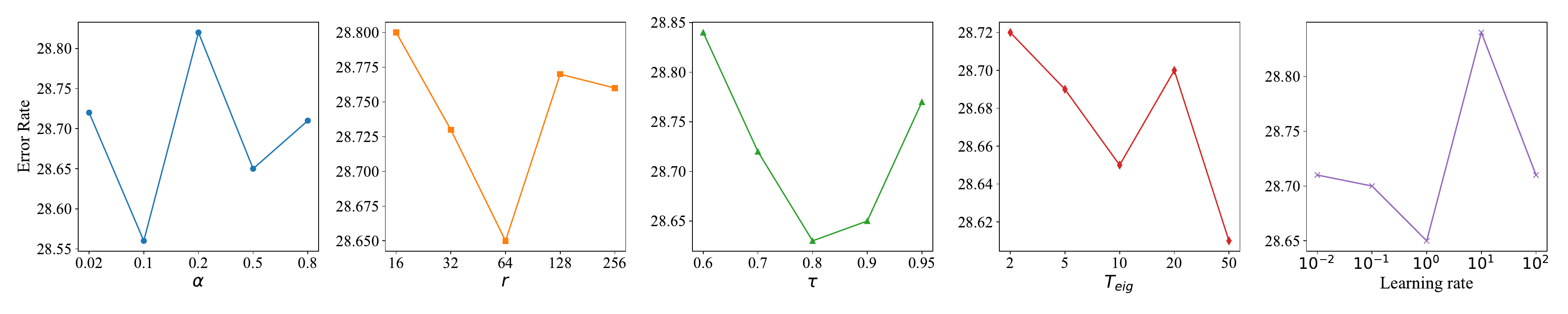}
    \caption{Hyper-parameter robustness: representative results from the sweep described in Section~\ref{sec:hyper}. Each subplot shows the evaluated metric as a function of a single hyper-parameter while other hyper-parameters are held near their default values.}
    \label{fig:hyper}
\end{figure*}
\begin{figure}[t]
    \centering
    \includegraphics[width=1.00\linewidth]{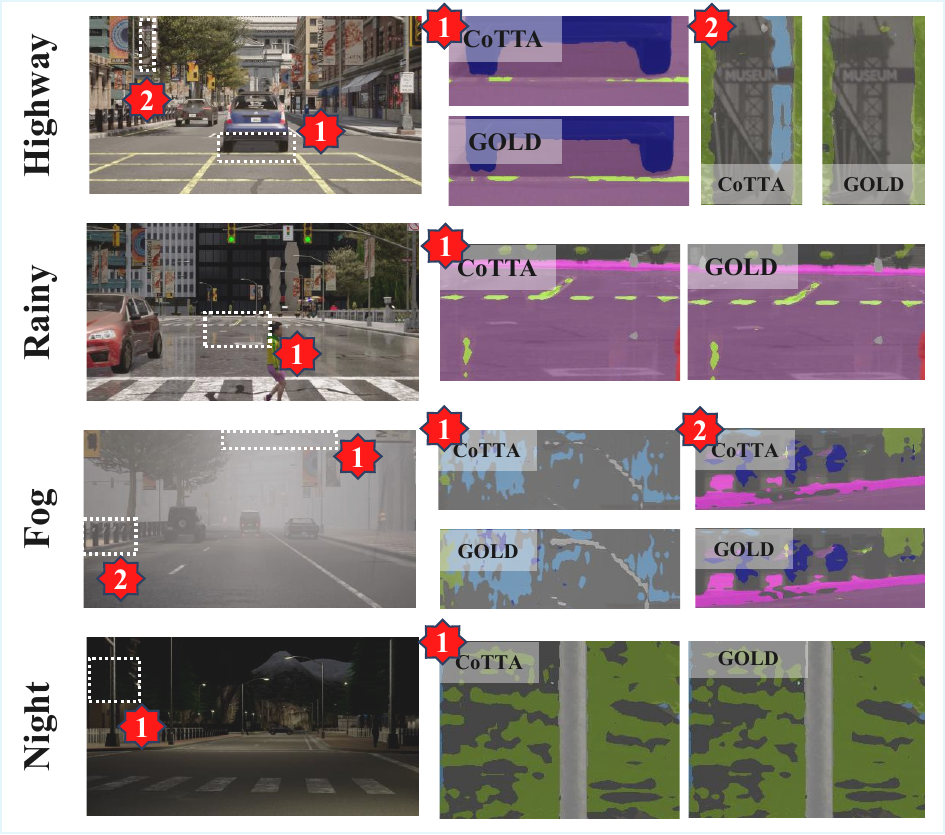}
    \caption{Detailed segmentation comparisons (cropped views). White boxes indicate regions shown at higher resolution to emphasize the qualitative gains of GOLD in challenging conditions (shadows, reflections, fog/night).}
    \label{fig:segdtl}
\end{figure}

\subsection{Adapter Design and Pseudo-Label Robustness}

\label{app:adapter_pseudolabel}

\paragraph{Why an adapter?}
We adopt an adapter-style design to meet the core constraint of CTTA: \emph{online adaptation must be both efficient and stable under non-stationary test streams}. Updating the full backbone (or a large fraction of parameters) can improve short-term fitting but often harms efficiency and exacerbates drift/forgetting as the stream evolves. In contrast, a lightweight adapter provides a structured and low-dimensional update interface: it preserves most pretrained representations while allowing the model to quickly adjust along a compact set of directions that are most relevant to the current target shift.
This design choice is aligned with continual learning insights that emphasize \emph{controlling the effective update subspace} to improve robustness to stream order and to reduce misleading gains that arise from unstable adaptation dynamics \cite{lai2025order,zhou2024expandable,cao2025erroreraser,cao2024open}. In GOLD, the adapter is further constrained to operate in a low-rank \emph{golden subspace}, so that adaptation is concentrated on a minimal set of feature directions rather than dispersed across the entire representation space.

\paragraph{Robustness of confidence-based pseudo labels for AGOP.}
AGOP relies on pseudo labels derived from the model's predictions to form a sample-wise gradient outer product estimate. A natural concern is that noisy pseudo labels may corrupt the estimated directions and lead to drift. We justify the robustness of our design from two complementary aspects.

\noindent \textbf{(i) Built-in safeguards.}
GOLD includes three mechanisms that jointly stabilize pseudo-label-based estimation.
\emph{(a) Warm-start initialization.} We initialize the subspace estimate with the theoretically motivated prior $G_0 = W^\top W$ (Sec.~3.2), which anchors early updates to the pretrained classifier geometry when the model is least adapted and pseudo labels are potentially noisier.
\emph{(b) EMA smoothing with self-training consistency.} We use an exponential moving average (EMA) teacher to generate more stable predictions and enforce self-training consistency (Eq.~17), which reduces the variance of pseudo labels across adjacent samples/batches and discourages abrupt direction changes.
\emph{(c) Low-rank/subspace constraint.} By restricting updates to a compact low-rank subspace, the estimator is regularized implicitly: even if individual pseudo labels are imperfect, their influence is projected onto a small set of directions, mitigating error accumulation and reducing forgetting/drift over long streams.

\noindent \textbf{(ii) Empirical sensitivity.}
We further validate robustness by sweeping the confidence threshold used to filter pseudo labels. As shown in Fig.~S1, GOLD remains insensitive across a wide range of thresholds: even with a relatively low threshold, the performance drop is only $\sim$0.2. This suggests that the proposed safeguards effectively prevent noisy pseudo labels from dominating the AGOP-based subspace maintenance, leading to stable adaptation in practice.

\section{Supplementary Experiments}\label{sec:s3}

This section supplements the main paper with additional empirical analyses on (i) the sensitivity of GOLD to \textbf{test-time batch size}, (ii) \textbf{hyper-parameter robustness}, (iii) \textbf{comprehensive efficiency} (trainable parameter ratio, FLOPs, and peak GPU memory), and (iv) additional \textbf{segmentation} results and qualitative comparisons. Unless otherwise specified, all experiments use the same pretrained backbones and CTTA evaluation protocol as in the main text. For corruption benchmarks, we report mean performance over the full benchmark and follow the same metrics and legend definitions as in the main paper.

\subsection{Detailed Experimental Setup}

\paragraph{Datasets.}
We evaluate GOLD on three standard continual test-time adaptation benchmarks: CIFAR-10-C, CIFAR-100-C, and ImageNet-C. 
CIFAR-10-C and CIFAR-100-C are corrupted versions of the original CIFAR-10 and CIFAR-100 test sets, respectively, while ImageNet-C is constructed from the ImageNet validation set with a diverse set of synthetic corruptions. 
Following the standard protocol, we consider 15 corruption types at severity level 5, including \textit{gaussian\_noise}, \textit{shot\_noise}, \textit{impulse\_noise}, \textit{defocus\_blur}, \textit{glass\_blur}, \textit{motion\_blur}, \textit{zoom\_blur}, \textit{snow}, \textit{frost}, \textit{fog}, \textit{brightness}, \textit{contrast}, \textit{elastic\_transform}, \textit{pixelate}, and \textit{jpeg\_compression}. 
We adopt the one-pass online adaptation setting, where each target sample is observed only once in a streaming manner without revisiting previous target data.

\paragraph{Parameter settings.}
For a fair comparison, all methods use the same optimization setup during test-time adaptation. 
We update the affine parameters of normalization layers, including BatchNorm, LayerNorm, and GroupNorm. 
Concretely, for each normalization layer, we optimize only its \texttt{weight} and \texttt{bias} parameters. 
For all methods, we use Adam as the optimizer with learning rate $1\times10^{-3}$, weight decay $0$, and one adaptation step per batch. 
We use Adam with $\texttt{betas}=(0.9, 0.999)$.

\paragraph{Augmentation details.}
For test-time augmentation, we follow a unified transformation pipeline consisting of Gaussian blur, center crop, random horizontal flip, additive Gaussian noise, and clipping to the valid input range. 
Specifically, we use \texttt{GaussianBlur} with kernel size 5 and sigma sampled from $[0.001, 0.25]$ for the soft setting and from $[0.001, 0.5]$ otherwise, followed by \texttt{CenterCrop}, \texttt{RandomHorizontalFlip} with probability 0.5, additive Gaussian noise, and a final clipping operation to $[0,1]$.

\paragraph{Source models.}
Following CoTTA, we employ standard pre-trained source models for different benchmarks: WideResNet-28~\cite{zagoruyko2016wide} for CIFAR10-C, ResNeXt-29~\cite{yin2019fourier} for CIFAR100-C, and ResNet-50~\cite{dobler2023robust} for ImageNet-C. 
These source models are fixed before test-time adaptation and serve as the initialization for all compared methods.

\paragraph{Source prototype extraction.}
We extract source prototypes offline from the source training set before online adaptation. 
Specifically, we first feed all source samples through the feature extractor and collect their intermediate feature representations together with the corresponding class labels. 
For feature tensors with spatial dimensions, we flatten them into one-dimensional vectors. 
Then, for each class, we compute the class prototype as the mean feature vector of all source samples belonging to that class:
\[
\mathbf{p}_c = \frac{1}{N_c}\sum_{i:y_i=c}\mathbf{f}_i,
\]
where $\mathbf{f}_i$ denotes the extracted source feature, $y_i$ is its label, and $N_c$ is the number of source samples in class $c$. 
If a class has no source samples, we assign it a zero prototype vector. 
All source prototypes are computed once before adaptation and kept fixed throughout the online test-time adaptation process.

\subsection{Effect of batch size}\label{sec:bs}

In the main text we reported results obtained with a batch size of 200 on CIFAR-C and 64 on ImageNet-C. Here we study the influence of test-time batch size on performance by evaluating GOLD and several baseline methods at batch sizes \(\{4,16,32,64\}\). Table~\ref{tab:bs1} summarizes the results.

\begin{table*}[t]
  \caption{Performance under different test-time batch sizes. Results are shown for CIFAR-10-C, CIFAR-100-C and ImageNet-C. Columns report the evaluated batch size. The best value in each column (among the listed methods) is typeset in \textbf{bold}.}
  \label{tab:bs1}
  \centering
  \begin{tabular}{lcccc|cccc|cccc}
    \toprule
    & \multicolumn{4}{c}{\textbf{CIFAR10-C}} & \multicolumn{4}{c}{\textbf{CIFAR100-C}} & \multicolumn{4}{c}{\textbf{ImageNet-C}} \\
    \cmidrule(r){2-5} \cmidrule(lr){6-9} \cmidrule(l){10-13}
    \textbf{Method} & 4 & 16 & 32 & 64 & 4 & 16 & 32 & 64 & 4 & 16 & 32 & 64 \\
    \midrule
    TENT \cite{wang2020tent}    & 86.9 & 61.9 & 40.2 & 22.9 & 98.3 & 96.2 & 92.9 & 86.1 & 99.0 & 82.5 & 64.5 & 62.6 \\
    Ada \cite{chen2022contrastive}     & 34.5 & 19.7 & 17.7 & 17.4 & 90.1 & 54.6 & 41.3 & 35.4 & 98.4 & 81.9 & 69.9 & 65.5 \\
    CoTTA \cite{wang2022continual}   & 52.6 & 26.9 & 21.3 & 17.5 & 68.8 & 39.1 & 36.7 & 34.3 & 97.5 & 82.0 & 64.8 & 62.7 \\
    RMT \cite{dobler2023robust}      & 51.4 & 26.5 & 20.0 & 16.7 & 87.4 & 48.7 & 39.1 & 32.7 & 99.4 & 83.4 & 68.7 & 60.2 \\
    SANTA \cite{chakrabarty2023santa} & 39.5 & 19.5 & 17.5 & 16.8 & 58.5 & 35.8 & 33.3 & 32.5 & 98.7 & 68.0 & 62.8 & 60.1 \\
    GOLD (ours)                     & \textbf{31.7} & \textbf{18.9} & \textbf{17.0} & \textbf{16.4} & \textbf{57.8} & \textbf{35.1} & \textbf{32.8} & \textbf{31.9} & \textbf{97.3} & 69.5 & \textbf{62.1} & \textbf{59.3} \\
    \bottomrule
  \end{tabular}
\end{table*}

\paragraph{Discussion.}
The table shows that GOLD is stable across a wide range of batch sizes. On CIFAR-10-C, GOLD attains the best metric among compared methods for all evaluated batch sizes (columns 4--64). On CIFAR-100-C and ImageNet-C GOLD is competitive with state-of-the-art baselines; in particular GOLD achieves very strong performance at medium-to-large batch sizes on ImageNet-C (see bold entries). These results indicate that the proposed AGOP-based subspace estimation and the low-rank adaptation scheme are effective even when the number of high-confidence samples per batch is relatively small. In practice we recommend using batch sizes of at least 16–32 when computational resources allow, though reasonable performance is still obtained with batch size 4.

\subsection{Hyper-parameter robustness}\label{sec:hyper}

We performed an extensive hyper-parameter sweep to assess stability. The swept ranges were:
\begin{itemize}
  \item EMA momentum \(\alpha\in\{0.02,0.1,0.2,0.5,0.8\}\);
  \item retained rank \(r\in\{16,32,64,128,256\}\);
  \item confidence threshold \(\tau\in\{0.6,0.7,0.8,0.9,0.95\}\);
  \item eigendecomposition period \(T_{\mathrm{eig}}\in\{2,5,10,20,50\}\) (in batches);
  \item learning rate for the small adaptive parameter set (scaling vector \(s\) and optional BN adapters) \(\eta\in\{10^{-2},10^{-1},1,10,10^{2}\}\).
\end{itemize}

Figure~\ref{fig:hyper} visualizes representative slices of this sweep (one plot per dataset / metric). The main conclusions are:
\begin{enumerate}
  \item \textbf{Overall robustness.} GOLD exhibits small performance variation over wide ranges of \(\alpha\) and \(\eta\); extreme values (very large \(\eta\) or \(\alpha\)) can destabilize adaptation but are easy to detect.
  \item \textbf{Rank trade-off.} Increasing \(r\) generally improves performance up to a point; in our experiments \(r\) between 32 and 128 provides a good balance between accuracy and computational cost.
  \item \textbf{Confidence threshold.} A moderate threshold \(\tau\) (e.g. \(0.8\)) mitigates pseudo-label bias while retaining enough samples for stable AGOP estimation. Very high thresholds (e.g. \(0.95\)) reduce the effective sample size and may increase variance.
  \item \textbf{Eig update period.} Frequent eigen updates (small \(T_{\mathrm{eig}}\)) track fast distribution shifts but increase compute; values in the range \(T_{\mathrm{eig}}=5\!-\!20\) offered a good practical tradeoff in our workloads.
\end{enumerate}

These observations guided the default hyper-parameter choices used throughout the main experiments and indicate that GOLD can be tuned with modest effort for new datasets or deployment constraints.

\subsection{Comprehensive efficiency analysis.}
We provide a systematic efficiency comparison on CIFAR100-C (Table~\ref{tab:efficiency} and Fig.~4 in the main paper) to quantify the practical cost of online CTTA. All methods are evaluated under the same backbone and input resolution, and we report three complementary metrics: \textbf{(i) trainable parameter ratio}, \textbf{(ii) FLOPs}, and \textbf{(iii) peak GPU memory}. Together, these metrics capture \emph{what is updated} (parameter footprint), \emph{how much computation is incurred per batch} (throughput cost), and \emph{how much hardware budget is required} (deployment feasibility).

\paragraph{Trainable parameter ratio.}
This measures the fraction of parameters receiving gradients during adaptation. Full fine-tuning style CTTA methods (CoTTA, RMT, GTTA) effectively update the whole model, resulting in a 100\% trainable ratio. In contrast, \textbf{SANTA} and \textbf{GOLD} operate with lightweight modules: only a small adapter and/or scaling parameters are optimized, leading to orders-of-magnitude fewer trainable parameters (0.365\% and 0.373\%, respectively). This is particularly important for online deployment where frequent gradient updates must be performed with minimal overhead and minimal risk of representation drift.

\paragraph{FLOPs.}
We report the per-batch floating-point operations to reflect end-to-end adaptation cost, including both forward and backward passes. \textbf{CoTTA} is the most expensive due to full-model updates and multi-view/self-ensemble style operations, reaching 7842.80\,G FLOPs. \textbf{RMT} and \textbf{GTTA} reduce computation compared to CoTTA but still remain substantially heavier than adapter-based methods (1886.02\,G and 2614.27\,G). In contrast, \textbf{SANTA} and \textbf{GOLD} are significantly more efficient (1348.09\,G and 1425.14\,G), showing that restricting updates to a compact subspace can deliver strong performance without the heavy compute of full-parameter adaptation. Notably, GOLD incurs only a modest additional FLOPs over SANTA due to maintaining and applying the golden-subspace projection.

\paragraph{Peak GPU memory.}
Peak memory reflects the maximum activation/gradient/storage footprint during online adaptation. Memory is a key bottleneck for real-time CTTA, especially when running larger backbones or higher resolutions. Full-update methods require storing gradients for most layers, leading to high memory usage (CoTTA: 10.88\,GB; RMT/GTTA: 5.07--5.21\,GB). Adapter-based methods are notably lighter (SANTA: 4.61\,GB), and while \textbf{GOLD} uses slightly more memory (5.37\,GB) due to additional subspace-related buffers, it remains far below full-model adaptation and fits comfortably within typical single-GPU deployment constraints.

\paragraph{Takeaway.}
Table~\ref{tab:efficiency} confirms that \textbf{GOLD achieves a favorable accuracy--efficiency trade-off}: it keeps the trainable parameter ratio below 0.4\% while maintaining compute and memory footprints close to other lightweight methods, yet delivers stronger and more stable adaptation performance (main paper, Fig.~4). This supports our design principle that CTTA should prioritize \emph{structured minimal updates}---adapting only along critical directions---to maximize both generalization and deployability.

\vspace{-0.5em}
\begin{table}[ht]
    \centering
    \caption{Comprehensive efficiency comparison.}
    \vspace{-0.3em}
    \renewcommand{\arraystretch}{0.9}
    \setlength{\tabcolsep}{5pt}
    \label{tab:efficiency}
    \begin{tabular}{lccc}
        \toprule
        Method & \makecell[c]{Trainable parameter\\ratio (\%)} &  \makecell[c]{FLOPs \\  (G)} & \makecell[c]{Peak GPU\\memory (GB)} \\
        \midrule
        CoTTA & 100     & 7842.80 & 10.88 \\
        RMT   & 100     & 1886.02 & 5.07  \\
        GTTA  & 100     & 2614.27 & 5.21  \\
        SANTA & 0.365   & 1348.09 & 4.61  \\
        GOLD  & 0.373   & 1425.14 & 5.37  \\
        \bottomrule
    \end{tabular}
\end{table}
\vspace{-0.5em}

\paragraph{Runtime breakdown.} To further clarify the practical overhead introduced by GOLD, we profiled the wall-clock time of its main components during online adaptation. On batches where eigendecomposition is performed, the forward and backward passes dominate the runtime, accounting for 45.7\% and 50.8\% of the total time, respectively. In contrast, the additional overhead introduced by AGOP computation is \textbf{only 0.3\%}, while periodic eigendecomposition contributes 3.3\%. These results indicate that the extra cost of GOLD mainly comes from the standard optimization steps shared by most adaptation methods, whereas the proposed subspace maintenance itself adds only a small overhead. This supports our claim that GOLD achieves strong adaptation performance with a favorable efficiency profile.

\subsection{Expanded segmentation examples}

We provide enlarged views of selected segmentation examples to highlight qualitative improvements obtained by GOLD. Figure~\ref{fig:segdtl} shows cropped regions (white bounding boxes) that emphasize the areas of largest visual difference between the pretrained baseline and the adapted model. Representative cases include:
\begin{itemize}
  \item \textbf{Shadowed zebra crossings (Highway).} GOLD recovers occluded lane markings in regions where strong shadows cover parts of the road surface.
  \item \textbf{Puddles and water reflections (Rain).} The adapted model more reliably segments road surfaces and curbs in the presence of specular highlights and reflections.
  \item \textbf{Fog and low-visibility (Fog / Night).} GOLD improves detection of distant objects and preserves fine-grained foreground structure under heavy atmospheric degradation.
\end{itemize}
\subsection{Long-term Generalization under Repeated Domain Exposure}

\paragraph{Setup.}
To evaluate long-term generalization, we conduct an extended continual test-time adaptation experiment on CIFAR10-C. 
Instead of passing through the target stream only once, we repeatedly expose the model to the same corrupted target data for 10 rounds in sequence. 
This setting allows us to examine whether an adaptation method remains stable over prolonged online updates, or gradually suffers from error accumulation and representation drift. 
In particular, it reflects the method's ability to preserve robust generalization under repeated domain exposure, which is crucial for realistic deployment scenarios where incoming data may exhibit persistent or recurring shifts over a long time horizon.

\paragraph{Results.}
Table~\ref{tab:longterm} reports the average error rates of different methods under the 10-round continual adaptation setting on CIFAR10-C. 
Compared with prior CTTA methods, GOLD achieves the best performance with an error rate of 14.15\%, showing stronger stability and better long-term generalization under sustained adaptation. 
These results suggest that restricting updates to the dynamically estimated golden subspace can effectively mitigate parameter drift and maintain adaptation quality over long horizons.

\begin{table}[ht]
    \centering
    \caption{Long-term generalization comparison on CIFAR10-C under 10 rounds of continual adaptation. Lower is better.}
    \vspace{-0.3em}
    \renewcommand{\arraystretch}{1.0}
    \setlength{\tabcolsep}{6pt}
    \label{tab:longterm}
    \begin{tabular}{lcccc}
        \toprule
        Method & CoTTA & RMT & BeCoTTA & SANTA \\
        Error (\%) & 16.21 & 16.73 & 16.28 & 16.29 \\
        \midrule
        Method & EATA & SAR & ViDA & \textbf{GOLD} \\
        Error (\%) & 17.84 & 20.31 & 17.24 & \textbf{14.15} \\
        \bottomrule
    \end{tabular}
\end{table}